\def\eqref#1{equation~\ref{#1}}
\def\1{\bm{1}}
\def\vmu{{\bm{\mu}}}
\def\vs{{\bm{s}}}
\def\vx{{\bm{x}}}
\def\vy{{\bm{y}}}
\def\mA{{\bm{A}}}
\def\mD{{\bm{D}}}
\def\mI{{\bm{I}}}
\def\mM{{\bm{M}}}
\def\mV{{\bm{V}}}
\def\mW{{\bm{W}}}
\def\mPhi{{\bm{\Phi}}}
\DeclareMathAlphabet{\mathsfit}{\encodingdefault}{\sfdefault}{m}{sl}
\SetMathAlphabet{\mathsfit}{bold}{\encodingdefault}{\sfdefault}{bx}{n}
\def\eqref#1{equation~\ref{#1}}
\def\1{\bm{1}}
\def\vmu{{\bm{\mu}}}
\def\vsigma{{\bm{\sigma}^2}}
\def\vphi{{\bm{\phi}}}
\def\vs{{\bm{s}}}
\def\vx{{\bm{x}}}
\def\vy{{\bm{y}}}
\def\mA{{\bm{A}}}
\def\mD{{\bm{D}}}
\def\mI{{\bm{I}}}
\def\mM{{\bm{M}}}
\def\mV{{\bm{V}}}
\def\mW{{\bm{W}}}
\def\mPhi{{\bm{\Phi}}}
\def\mPsi{{\bm{\Psi}}}
\def\mTheta{{\bm{\Theta}}}
\newcommand*{\norm}[1]{\left\|#1\right\|}
\newcommand*{\abs}[1]{\left|#1\right|}
\newcommand*{\paran}[1]{\left(#1\right)}
\def\oPsi{{\bm{\Psi}_o}}
\def\supp{\text{supp}}
\newcolumntype{Y}{>{\centering\arraybackslash}X}
\newcommand{\STAB}[1]{\begin{tabular}{@{}c@{}}#1\end{tabular}}
\newcommand{\fv}[1]{\textcolor{black}{#1}}
\theoremstyle{plain}
\newtheorem{theorem}{Theorem}[section]
\newtheorem{proposition}[theorem]{Proposition}
\newtheorem{lemma}[theorem]{Lemma}
\theoremstyle{definition}
\theoremstyle{remark}
\newtheorem{remark}[theorem]{Remark}
\acrodef{AMP}{Approximate message passing}
\acrodef{AoA}{Angle of arrival}
\acrodef{AoD}{Angle of departure}
\acrodef{BCS}{Bayesian Compressed Sensing}
\acrodef{CDL}{clustered delay line}
\acrodef{CSI}{channel state information}
\acrodef{CIR}{channel impulse response}
\acrodef{DLISTA}{Dictionary LISTA}
\acrodef{A-DLISTA}{Augmented Dictionary Learning ISTA}
\acrodef{HyperLISTA}{HyperLISTA}
\acrodef{ELBO}{Evidence Lower Bound}
\acrodef{IHTA}{Iterative Hard-Thresholding Algorithm}
\acrodef{ISTA}{Iterative Soft-Thresholding Algorithm}
\acrodef{KL}{Kullback-Leibler}
\acrodef{VLISTA}{Variational Learning ISTA}
\acrodef{LISTA}{Learning Iterative Soft-Thresholding Algorithm}
\acrodef{ALISTA}{Analytic LISTA}
\acrodef{NALISTA}{Neurally Augmented ALISTA}
\acrodef{MMSE}{Minimum Mean Square Error}
\acrodef{MNSE}{Mean Normalized Square Error}
\acrodef{NSE}{Normalized Square Error}
\acrodef{NN}{Neural Network}
\acrodef{OMP}{Orthogonal Matching Pursuit}
\acrodef{PCA}{Principle Component Analysis}
\title{Variational Learning ISTA}
\author{\name Fabio Valerio Massoli \email{fmassoli@qti.qualcomm.com}\\ 
      \addr Qualcomm AI Research\thanks{Qualcomm AI Research is an initiative of Qualcomm Technologies, Inc.}
      \AND
      \name Christos Louizos \email clouizos@qti.qualcomm.com\\
      \addr Qualcomm AI Research\footnotemark[1]
      \AND
      \name Arash Behboodi \email abehboodi@qti.qualcomm.com \\
      \addr Qualcomm AI Research\footnotemark[1]}
\begin{document}

\maketitle

\begin{abstract}
Compressed sensing combines the power of convex optimization techniques with a sparsity-inducing prior on the signal space to solve an underdetermined system of equations. For many problems, the sparsifying dictionary is not directly given, nor its existence can be assumed. Besides, the sensing matrix can change across different scenarios. Addressing these issues requires solving a sparse representation learning problem, namely dictionary learning, taking into account the epistemic uncertainty of the learned dictionaries and, finally, jointly learning sparse representations and reconstructions under varying sensing matrix conditions.
We address both concerns by proposing a variant of the LISTA architecture. First, we introduce \ac{A-DLISTA}, which incorporates an augmentation module to adapt parameters to the current measurement setup. Then, we propose to learn a distribution over dictionaries via a variational approach, dubbed \ac{VLISTA}. \ac{VLISTA} exploits A-DLISTA as the likelihood model and approximates a posterior distribution over the dictionaries as part of an unfolded LISTA-based recovery algorithm. 
As a result, \ac{VLISTA} provides a probabilistic way to jointly learn the dictionary distribution and the reconstruction algorithm with varying sensing matrices. We provide theoretical and experimental support for our architecture and show that our model learns calibrated uncertainties.
\end{abstract}

\section{Introduction}

\label{sec:intro}

By imposing a prior on the signal structure, compressed sensing solves underdetermined inverse problems. Canonical examples of signal structure and sensing medium are sparsity and linear inverse problems. 
\fv{Compressed sensing aims at reconstructing an unknown signal of interest, $\vs \in \mathbb{R}^n$, from a set of linear measurements, $\vy \in \mathbb{R}^m$, acquired by means of a linear transformation, $\mPhi \in \mathbb{R}^{m\times n}$ where $m<n$. Due to the underdetermined nature of the problem, $\vs$ is typically assumed to be sparse in a given basis. Hence, $\vs=\mPsi\vx$, where $\mPsi \in \mathbb{R}^{n\times b}$ is a matrix whose columns represent the sparsifying basis vectors, and $\vx\in \mathbb{R}^b$ is the sparse representation of $\vs$. Therefore, given noiseless observations $\vy=\mPhi\vs$, of an unknown signal, $\vs=\mPsi\vx$, we seek to solve the LASSO problem:}

\begin{equation}
\underset{\vx}{\mathrm{argmin}} \Vert \vy - \mPhi \mPsi \vx \Vert_2^2 + \rho \Vert \vx \Vert_1
\label{eq:lasso_typical}
\end{equation}

where $\rho$ is a constant scalar controlling the sparsifying penalty. Iterative algorithms, such as \ac{ISTA}~\citep{daubechies2004iterative}, represent a popular approach to solving such problems.
A number of studies have been conducted to improve compressed sensing solvers. A typical approach involves unfolding iterative algorithms as layers of neural networks and learning parameters end-to-end \citep{gregor_learning_2010}. 
\fv{Such ML algorithms are typically trained by minimizing the reconstruction objective:}

\begin{equation}
 \mathcal{L}(\vx, \hat{\vx}_T)=\mathbb{E}_{\vx \sim \mathcal{D}}[\Vert \vx - \hat{\vx}_T \Vert_2^2]
\label{eq:rec_obj}
\end{equation} 

where the expected value is taken over data sampled from $\mathcal{D}$, and the subscript ``$T$'' refers to the last step, or layer, of the unfolded model.
Variable sensing matrices and unknown sparsifying dictionaries are some of the main challenges of data-driven approaches. By learning a dictionary and including it in optimization iterations, the work in \citet{aberdam_ada_lista_2021, schnoor_generalization_2022} aims to address these issues. However, data samples might not have exact sparse representations, so no ground truth dictionary is available. The issue can be more severe for heterogeneous datasets where the dictionary choice might vary from one sample to another. 
\fv{A real-world example would be the channel estimation problem in a Multi-Input Multi-Output (MIMO) mmwave wireless communication system~\citep{rodriguez2018frequency}. Such a problem can be cast as an inverse problem of the form $\vy=\mPhi\mPsi\vx$ and solved using compressive sensing techniques. The sensing matrix, $\mPhi$, represents the so-called \emph{beamforming matrix} while the dictionary, $\mPsi$, represents the sparsifying basis for the wireless channel itself. Typically, $\mPhi$ changes from one set of measurements to the next and the channel model might require different basis vectors across time. 
Adaptive acquisition is another example of application: in MRI image reconstruction, the acquisition step can be adaptive. Here, the sensing matrix is sampled from a known distribution to reconstruct the signal. Therefore, given the adaptive nature of the process, each data sample is characterized by a different $\mPhi$~\citep{bakker2020experimental,yin2021end}.}

\paragraph{Our Contribution} 
A principled approach to this problem would be to leverage a Bayesian framework and define a distribution over the dictionaries with proper uncertainty quantification. We follow two steps to accomplish this goal. First, we introduce Augmented Dictionary Learning ISTA (A-DLISTA), an augmented version of the \ac{LISTA}-like model, capable of adapting some of its parameters to the current measurement setup. We theoretically motivate its design and empirically prove its advantages compared to other non-adaptive \ac{LISTA}-like models in a non-static measurement scenario, i.e., considering varying sensing matrices. Finally, to learn a distribution over dictionaries, we introduce Variational Learning ISTA (\ac{VLISTA}), a variational formulation that leverages \ac{A-DLISTA} as the likelihood model. \ac{VLISTA} refines the dictionary iteratively after each iteration based on the outcome of the previous layer. Intuitively, our model can be understood as a form of a recurrent variational autoencoder, e.g., \citet{chung2015recurrent}, where at each iteration of the algorithm we have an approximate posterior distribution over the dictionaries conditioned on the outcome of the previous iteration. Moreover, VLISTA provides uncertainty estimation to detect Out-Of-Distribution (OOD) samples. We train \ac{A-DLISTA} using the same objective as in \autoref{eq:rec_obj} while for \ac{VLISTA} we maximize the ELBO (\autoref{eqn:obj_func}). We refer the reader to \autoref{sec:app_training} for the detailed derivation of the ELBO.
\cite{behrens_neurally_2021} proposed an augmented version of \ac{LISTA}, termed \ac{NALISTA}. However, there are key differences with \ac{A-DLISTA}. In contrast to \ac{NALISTA}, our model adapts some of its parameters to the current sensing matrix and learned dictionary. Hypothetically, \ac{NALISTA} could handle varying sensing matrices. However, that comes at the price of solving, for each datum, the inner optimization step to evaluate the $\bf W$ matrix. Finally, while \ac{NALISTA} uses an LSTM as augmentation network, \ac{A-DLISTA} employs a convolutional neural network (shared across all layers). Such a difference reflects the type of dependencies between layers and input data that the networks try to model. We report in \autoref{sec:convergence} and \autoref{sec:app_impl} detailed discussions about the theoretical motivation and architectural design for \ac{A-DLISTA}.

Our work's main contributions can be summarized as follows:

\begin{itemize}

    \item We design an augmented version of a \ac{LISTA}-like type of model, dubbed A-DLISTA, that can handle non-static measurement setups, i.e. per-sample sensing matrices, and adapt parameters to the current data instance.

    \item We propose \ac{VLISTA} that learns a distribution over sparsifying dictionaries. The model can be interpreted as a Bayesian \ac{LISTA} model that leverages A-DLISTA as the likelihood model.

    \item \ac{VLISTA} adapts the dictionary to optimization dynamics and therefore can be interpreted as a hierarchical representation learning approach, where the dictionary atoms gradually permit more refined signal recovery.

    \item The dictionary distributions can be used successfully for out-of-distribution sample detection. 

\end{itemize}

%% We might as well remove

The remaining part of the paper is organized as follows. In \autoref{sec:rel_works} we briefly report related works relevant to the current research. In \autoref{sec:backhround} and \autoref{sec:model_formulation} we introduce some background notions and details of our model formulations, respectively. Datasets, baselines, and experimental results are described in \autoref{sec:exp_results}. Finally, we draw our conclusion in \autoref{sec:conclusion}.
\section{Related Works}
\label{sec:rel_works}

In compressed sensing, recovery algorithms have been extensively analyzed theoretically and numerically \citep{foucart_mathematical_2013}. One of the most prominent approaches is using iterative algorithms, such as: \ac{ISTA} \citep{daubechies2004iterative}, \ac{AMP} \citep{donoho_message-passing_2009} \ac{OMP} \citep{pati_orthogonal_1993, Davis_adaptive_1994}, and \ac{IHTA} \citep{blumensath_iterative_2009}.
These algorithms have associated hyperparameters, including the number of iterations and soft threshold, which can be adjusted to better balance performance and complexity.
With unfolding iterative algorithms as layers of neural networks, these parameters can be learned in an end-to-end fashion from a dataset see, for instance, some variants \citet{zhang_ista-net_2018,metzler_learned_2017,yang_deep_2016,borgerding_amp-inspired_2017, sprechmann_learning_2015}.
%\paragraph{\ac{BCS} and Dictionary learning} 
In previous studies by  \citet{zhou_non-parametric_2009,zhou_nonparametric_2012}, a non-parametric Bayesian method for dictionary learning was presented. The authors focused on a fully Bayesian joint compressed sensing inversion and dictionary learning, where the dictionary atoms were drawn and fixed beforehand. Bayesian compressive sensing \fv{(BCS)} \citep{ji_bayesian_2008} uses relevance vector machines (RVMs) \citep{tipping_sparse_2001} and a hierarchical prior to model distributions of each entry. This line of work quantifies the uncertainty of recovered entries while assuming a fixed dictionary. Our current work differs by accounting for uncertainty in the unknown dictionary by defining a distribution over it.
%\paragraph{LISTA models}
Learning \ac{ISTA} was initially introduced by \citet{gregor_learning_2010}. Since then, many works have followed, including those by \citet{behrens_neurally_2021,liu_alista_2019,chen_hyperparameter_lista_2021, 
wu_sparse_2020}. These subsequent works provide guidelines for improving LISTA, for example, in convergence, parameter efficiency, step size and threshold adaptation, and overshooting. However, they assume fixed and known sparsifying dictionaries and sensing matrices. Researches by \citet{aberdam_ada_lista_2021,behboodi_compressive_2021, schnoor_generalization_2022} have explored ways to relax these assumptions, including developing models that can handle varying sensing matrices and learn dictionaries. The authors in \citet{schnoor_generalization_2022,behboodi_compressive_2021} provide an architecture that can both incorporate varying sensing matrices and learn dictionaries. However, their focus is on the theoretical analysis of the model. Furthermore, there are theoretical studies on the convergence and generalization of unfolded networks, see for example: \citet{giryes_tradeoffs_2018,pu_optimization_2022, aberdam_ada_lista_2021,pu_optimization_2022,chen_theoretical_2018, behboodi_compressive_2021, schnoor_generalization_2022}.
Our paper builds on these ideas by modelling a distribution over dictionaries and accounting for epistemic uncertainty. Previous studies have explored theoretical aspects of unfolded networks, such as convergence and generalization, and we contribute to this body of research by considering the impact of varying sensing matrices and dictionaries.
%\paragraph{Recurrent Variational models} 
The framework of variational autoencoders (VAEs) enables the learning of a generative model through latent variables~\citep{kingma2013auto, rezende2014stochastic}. When there are data-sample-specific dictionaries in our proposed model, it reminisces extensions of VAEs to the recurrent setting~\citep{chung2015recurrent,chung2016hierarchical}, which assumes a sequential structure in the data and imposes temporal correlations between the latent variables. Additionally, there are connections and similarities to Markov state-space models, such as the ones described in~\citet{Krishnan_Shalit_Sontag_2017}.
%\paragraph{Bayesian Deep Learning} 
By using global dictionaries in VLISTA, the model becomes a variational Bayesian Recurrent Neural Network. Variational Bayesian neural networks were first introduced in~\citet{blundell_weight_2015}, with independent priors and variational posteriors for each layer. This work has been extended to recurrent settings in~\citet{fortunato_bayesian_2019}. The main difference between these works and our setting is the prior and variational posterior. At each step, the prior and variational posterior are conditioned on previous steps instead of being fixed across steps.

\section{Background}
\label{sec:backhround}

\subsection{Sparse linear inverse problems}
We consider linear inverse problems of the form: $\vy=\mPhi\vs$, where we have access to a set of linear measurements $\vy\in\mathbb{R}^m$ of an unknown signal $\vs\in\mathbb{R}^n$, acquired through the forward operator $\mPhi \in \mathbb{R}^{m\times n}$. Typically, in compressed sensing literature, $\mPhi$ is called the sensing, or measurements, matrix and it represents an underdetermined system of equations for $m < n$. The problem of reconstructing $\vs$ from $(\vy, \mPhi)$ is ill-posed due to the shape of the forward operator. To uniquely solve for $\vs$, the signal is assumed to admit a sparse representation, $\vx\in\mathbb{R}^b$, in a given basis, $\{e_i\in\mathbb{R}^{n}\}_{i=0}^{b}$. The $e_i$ vectors are called \textit{atoms} and are collected as the columns of a matrix $\mPsi \in \mathbb{R}^{n\times b}$ termed the \textit{sparsifying dictionary}. Therefore, the problem of estimating $\vs$, given a limited number of observations $\vy$ through the operator $\mPhi$, is translated to a sparse recovery problem: $\vx^* = \text{arg min}_\vx \Vert \vx \Vert_0$ s.t. $\vy=\mPhi\mPsi\vx$. Given that the $l_0$ pseudo-norm requires solving an NP-hard problem, the $l_1$ norm is used instead as a convex relaxation of the problem. 
A proximal gradient descent-based approach for solving the problem yields the \ac{ISTA} algorithm \fv{~\citep{daubechies2004iterative,beck2009fast}}:
\begin{equation}\label{eq:ista}
    \vx_t=\eta_{\theta_t}\left(\vx_{t-1}+\gamma_t (\mPhi\mPsi)^T(\vy-\mPhi\mPsi\vx_{t-1}) \right),
\end{equation}
where $t$ is the index of the current iteration, $\vx_t$ ($\vx_{t-1}$) is the reconstructed sparse vector at the current (previous) layer, and $\theta_t$ and $\gamma_t$ are the \textit{soft-threshold} and \textit{step size} hyperparameters, respectively. 
% The value of $\gamma_t$ is typically chosen as the inverse of the spectral norm of the matrix $\mA$, where $\mA=\mPhi\mPsi$, while 
Specifically, $\theta_t$ characterizes the \textit{soft-threshold function} given by: $\eta_{\theta_t}(\vx) = \mathrm{sign}(\vx)(|\vx| - \theta_t)_+$.
In the ISTA formulation, those two parameters are shared across all the iterations: $\gamma_t, \theta_t \rightarrow \gamma, \theta$. In what follows, we use the terms ``layers\fv{''} and ``iterations\fv{''} interchangeably when describing \ac{ISTA} and its variations. 

\subsection{LISTA}\label{sec:lista}
\ac{LISTA} \citep{gregor_learning_2010} is an unfolded version of the ISTA algorithm in which each iteration is parametrized by learnable matrices. Specifically, \ac{LISTA} reinterprets \autoref{eq:ista} as defining the layer of a feed-forward neural network implemented as $S_{\theta_t}\left(\mV_t\vx_{t-1}+\mW_t\vy \right)$ where $\mV_t,\mW_t$ are learnt from a dataset. In that way, \fv{those weights implicitly contain information about $\mPhi$ and $\mPsi$ that are assumed to be fixed.} 
%However, in many problems, the dictionary $\mPsi$ is not given, and the sensing matrix $\mPhi$ can change for each sample in the dataset.
As \ac{LISTA}, also its variations, e.g., \ac{ALISTA} \citep{liu_alista_2019}, \ac{NALISTA} \citep{behrens_neurally_2021} and HyperLISTA \citep{chen_hyperparameter_lista_2021}, require similar constraints such as a fixed dictionary and sensing matrix to reach the best performance. 
However, there are situations where one or none of the conditions are met (see examples in \autoref{sec:intro}).

\section{Method}\label{sec:model_formulation}

\subsection{Augmented Dictionary Learning ISTA (A-DLISTA)}\label{sec:dlista_aug}
To deal with situations where $\mPsi$ is unknown and $\mPhi$ is changing across samples, one can unfold the \ac{ISTA} algorithm and re-parametrize the dictionary as a learnable matrix. Such an algorithm is termed Dictionary Learning ISTA (DLISTA) \citep{pezeshki2022beyond,behboodi_compressive_2021,aberdam_ada_lista_2021} and, similarly to \autoref{eq:ista}, each layer is formulated as:
\begin{equation}
\vx_{t}=\eta_{\theta_t}\paran{\vx_{t-1}+\gamma_t(\mPhi\mPsi_t)^T(\vy-\mPhi\mPsi_t\vx_{t-1})},
\label{eq:dlista_main_step}
\end{equation}
with one last linear layer mapping $\vx$ to the reconstructed signal $\vs$. The model can be trained end-to-end to learn all $\theta_t,\gamma_t, \mPsi_t$. \fv{Differently from ISTA (\autoref{eq:ista}), DLISTA (\autoref{eq:dlista_main_step}) learns a dictionary specific for each layer, indicated by the subscript ``$t$''. The model can be trained end-to-end to learn all $\theta_t, \gamma_t, \mPsi_t$. The base model is similar to \citep{behboodi_compressive_2021,aberdam_ada_lista_2021}. However, it requires additional changes. Consider the $t$-th layer of DLISTA with the varying sensing matrix $\mPhi^k$ and define the following parameters:
\begin{align}
    \tilde{\mu}(t,\mPhi^k)&:= \max_{1\leq i\neq j\leq N} \abs{((\mPhi^k\mPsi_t)_i)^\top(\mPhi^k\mPsi_t)_j} \label{eq:mutual_coherence_main}\\
    \tilde{\mu}_2(t,\mPhi^k)&:= \max_{1\leq i, j\leq N} \abs{((\mPhi^k\mPsi_t)_i)^\top(\mPhi^k(\mPsi_t-\oPsi))_j} \label{eq:gen_mutual_coherence_main}\\
    \delta(\gamma,t,\mPhi^k)&:=\max_{i}\abs{1-\gamma \norm{(\mPhi^k\mPsi_t)_i}_2^2}\label{eq:gen_rip_constant_main}
\end{align}
where $\tilde{\mu}$ is the \textit{mutual coherence} of $\mPhi^k\mPsi_t$ \citep[Chapter 5]{foucart_mathematical_2013} and $\tilde{\mu}_2$ is closely connected to \textit{generalized mutual coherence} \citep{liu_alista_2019}. However, in contrast to the generalized mutual coherence, $\tilde{\mu}_2$ includes the diagonal inner product for $i=j$. 
%It captures the effect of mismatch with ground-truth dictionary. 
Finally, $\delta(\cdot)$ is reminiscent of the restricted isometry property (RIP) constant \citep{foucart_mathematical_2013}, a key condition for many recovery guarantees in compressed sensing. When the columns of the matrix $\mPhi^k\mPsi_t$ are normalized, the choice of $\gamma=1$ yields $\delta(\gamma,t,\mPhi^k)=0$. The following proposition provides conditions on each layer to improve the reconstruction error.
\begin{proposition}
Suppose that $\vy^k=\mPhi^k\mPsi_o\vx_*$, where $\vx_*$ is the ground truth sparse vector with support $\supp(\vx_*)=S$, and $\mPsi_o$ is the ground truth dictionary. For DLISTA iterations given as
\begin{equation}
    \vx_t=\eta_{\theta_t}\left(\vx_{t-1}+\gamma_t (\mPhi^k\mPsi_t)^T(\vy^k-\mPhi^k\mPsi_t\vx_{t-1}) \right),
\end{equation}
we have:
\begin{enumerate}
    \item If for all $t$, the pairs $(\theta_t,\gamma_t,\mPsi_t)$ satisfy
    \begin{align}
     \gamma_t \paran{\tilde{\mu} \norm{ \vx_{*}-\vx_{t-1}}_1 +\tilde{\mu}_2\norm{\vx_{*}}_1 }\leq {\theta_t},
     \label{ineq:threshold_selection_main}
\end{align}
then there is no false positive in each iteration. In other words, for all $t$, we have $\supp(\vx_{t})\subseteq \supp(\vx_*)$.
\item Assuming that the conditions of the last step hold, 
then we get the following bound on the error:
\begin{align*}
    \norm{\vx_{t} - \vx_{*}}_1 \leq \paran{\delta(\gamma_t) +\gamma_t\tilde{\mu}(|S|-1)} \norm{\vx_{t-1}-\vx_{*}}_1
    +  \gamma_t\tilde{\mu}_2 |S|\norm{\vx_*}_1   +|S|\theta_t.
\end{align*}
\end{enumerate}
\label{thm:convergence_main}
\end{proposition}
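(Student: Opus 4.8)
The plan is to analyze one DLISTA iteration by carefully tracking which coordinates can become nonzero and by controlling the magnitude of the error in $\ell_1$ norm. Throughout, write $\vr_{t-1} := \vx_* - \vx_{t-1}$ for the error before the update, and expand the pre-thresholding vector
\begin{equation*}
\vz_t := \vx_{t-1} + \gamma_t (\mPhi^k\mPsi_t)^\top(\vy^k - \mPhi^k\mPsi_t \vx_{t-1}).
\end{equation*}
Substituting $\vy^k = \mPhi^k\mPsi_o\vx_*$ and adding and subtracting $\mPhi^k\mPsi_t\vx_*$ inside the residual, I would rewrite $\vy^k - \mPhi^k\mPsi_t\vx_{t-1} = \mPhi^k\mPsi_t(\vx_* - \vx_{t-1}) + \mPhi^k(\mPsi_o - \mPsi_t)\vx_*$. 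Hence, coordinate-wise,
\begin{equation*}
(\vz_t)_i - (\vx_*)_i = (\vr_{t-1})_i - \gamma_t \big((\mPhi^k\mPsi_t)_i\big)^\top (\mPhi^k\mPsi_t)(\vx_* - \vx_{t-1}) + \gamma_t \big((\mPhi^k\mPsi_t)_i\big)^\top \mPhi^k(\mPsi_t - \oPsi)\vx_*,
\end{equation*}
where I used $\mPsi_o = \oPsi$. The term $\gamma_t ((\mPhi^k\mPsi_t)_i)^\top (\mPhi^k\mPsi_t)(\vx_* - \vx_{t-1})$ splits into the diagonal piece $\gamma_t \|(\mPhi^k\mPsi_t)_i\|_2^2 (\vr_{t-1})_i$ and the off-diagonal piece whose absolute value is bounded by $\gamma_t \tilde\mu \sum_{j\neq i}|(\vr_{t-1})_j| \le \gamma_t\tilde\mu \|\vr_{t-1}\|_1$; combining with the diagonal term and the definition of $\delta(\gamma_t)$ gives $|(\vr_{t-1})_i - \gamma_t\|(\mPhi^k\mPsi_t)_i\|_2^2(\vr_{t-1})_i| \le \delta(\gamma_t)|(\vr_{t-1})_i|$. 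The last term in the display is bounded in absolute value by $\gamma_t \tilde\mu_2 \|\vx_*\|_1$ by definition of $\tilde\mu_2$ (which, as noted, includes $i=j$, so no special care is needed for the support index).

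For part 1, I would argue as follows. Fix a coordinate $i \notin S$, so $(\vx_*)_i = 0$. If additionally $\supp(\vx_{t-1}) \subseteq S$ (true for $t=1$ with $\vx_0 = \vzero$, and inductively thereafter), then $(\vr_{t-1})_i = 0$, so the diagonal term vanishes and $|(\vz_t)_i| \le \gamma_t\tilde\mu \|\vr_{t-1}\|_1 + \gamma_t\tilde\mu_2\|\vx_*\|_1 = \gamma_t(\tilde\mu\|\vx_* - \vx_{t-1}\|_1 + \tilde\mu_2\|\vx_*\|_1)$. Under hypothesis \eqref{ineq:threshold_selection_main} this quantity is at most $\theta_t$, and the soft-threshold function $\eta_{\theta_t}$ kills any entry of magnitude $\le \theta_t$; hence $(\vx_t)_i = 0$. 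Since $i \notin S$ was arbitrary, $\supp(\vx_t) \subseteq S$, which closes the induction. (I should double-check the edge case: \eqref{ineq:threshold_selection_main} is stated "for all $t$", so the induction uses the hypothesis at step $t$ together with the support conclusion at step $t-1$.)

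For part 2, assuming $\supp(\vx_t), \supp(\vx_{t-1}) \subseteq S$, I would use the nonexpansiveness of soft-thresholding, $|\eta_{\theta_t}(a) - \eta_{\theta_t}(b)| \le |a-b|$, but applied in the form that is actually needed: for $i \in S$, $|(\vx_t)_i - (\vx_*)_i| = |\eta_{\theta_t}((\vz_t)_i) - (\vx_*)_i| \le |(\vz_t)_i - (\vx_*)_i| + \theta_t$ (since $|\eta_{\theta_t}(a) - a| \le \theta_t$ and then triangle inequality, or equivalently bounding the shrinkage directly). Summing over $i \in S$ and plugging in the coordinate-wise bound from the first paragraph — with $\delta(\gamma_t)|(\vr_{t-1})_i|$ summing to $\delta(\gamma_t)\|\vr_{t-1}\|_1$, the off-diagonal coherence term contributing $\gamma_t\tilde\mu(|S|-1)\|\vr_{t-1}\|_1$ after summing $\sum_{i\in S}\sum_{j\in S, j\neq i}|(\vr_{t-1})_j|$, the $\tilde\mu_2$ term contributing $\gamma_t\tilde\mu_2|S|\|\vx_*\|_1$, and the threshold term $|S|\theta_t$ — yields exactly
\begin{equation*}
\|\vx_t - \vx_*\|_1 \le \big(\delta(\gamma_t) + \gamma_t\tilde\mu(|S|-1)\big)\|\vx_{t-1} - \vx_*\|_1 + \gamma_t\tilde\mu_2|S|\|\vx_*\|_1 + |S|\theta_t.
\end{equation*}

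The main obstacle, I expect, is getting the off-diagonal bookkeeping exactly right: when summing $\sum_{i \in S}$ of the off-diagonal coherence term one must see that each $|(\vr_{t-1})_j|$ appears with multiplicity $|S| - 1$ (not $|S|$), which is where the $(|S|-1)$ factor comes from, while the diagonal contribution has already been absorbed into $\delta(\gamma_t)$ — so the two must be kept strictly separate to avoid double counting. A secondary subtlety is making sure the constants $\tilde\mu, \tilde\mu_2, \delta$ are the right maxima: since they are defined as worst-case over index pairs for the given $(t, \mPhi^k)$, every coordinate-wise estimate above is uniform in $i$ and $j$, so the bounds go through without needing any additional structure. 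The rest is routine triangle-inequality manipulation.
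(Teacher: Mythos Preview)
Your proposal is correct and mirrors the paper's proof almost exactly: the same induction for part~1 (decompose the residual into the $\mPhi^k\mPsi_t(\vx_*-\vx_{t-1})$ and $\mPhi^k(\mPsi_o-\mPsi_t)\vx_*$ pieces, bound the off-support pre-threshold entries by $\gamma_t(\tilde\mu\|\vr_{t-1}\|_1+\tilde\mu_2\|\vx_*\|_1)$, apply the threshold condition), and the same diagonal/off-diagonal/mismatch split plus the $|\eta_{\theta_t}(a)-b|\le|a-b|+\theta_t$ inequality for part~2, including the $(|S|-1)$ bookkeeping. The sign in your displayed identity for $(\vz_t)_i-(\vx_*)_i$ is flipped, but this is harmless since every subsequent estimate is in absolute value.
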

We provide the derivation of Proposition~\ref{thm:convergence_main} together with additional theoretical results in~\autoref{sec:convergence}. Proposition~\ref{thm:convergence_main} provides insights about the choice of $\gamma_t$ and $\theta_t$, and also suggests that $(\delta(\gamma_t) +\gamma_t\tilde{\mu}(|S|-1))$ needs to be smaller than one
to reduce the error at each step.
} 

% \begin{figure}[h]
% \begin{center}
% \includegraphics[width=\linewidth]{}
% \end{center}
% \caption{\ac{A-DLISTA} architecture. Each blue block represents a single soft-thresholding operation parametrized by the dictionary $\mPsi_t$ together with threshold and step size $\{\theta_t, \gamma_t\}$ at layer $t$. The red blocks represent the augmentation network (with shared parameters across layers) that adapts $\{\theta_t, \gamma_t\}$ for layer $t$ based on the dictionary $\mPsi_t$ and the current measurement setup $\mPhi^i$ for the $i-$th data sample. In such a setup, the dictionary is parametrized as a learnable matrix and can be shared across layers.
% }
% \label{fig:dlista_aug}
% \end{figure}

\begin{figure}[ht]
\begin{center}
\includegraphics[width=\linewidth]{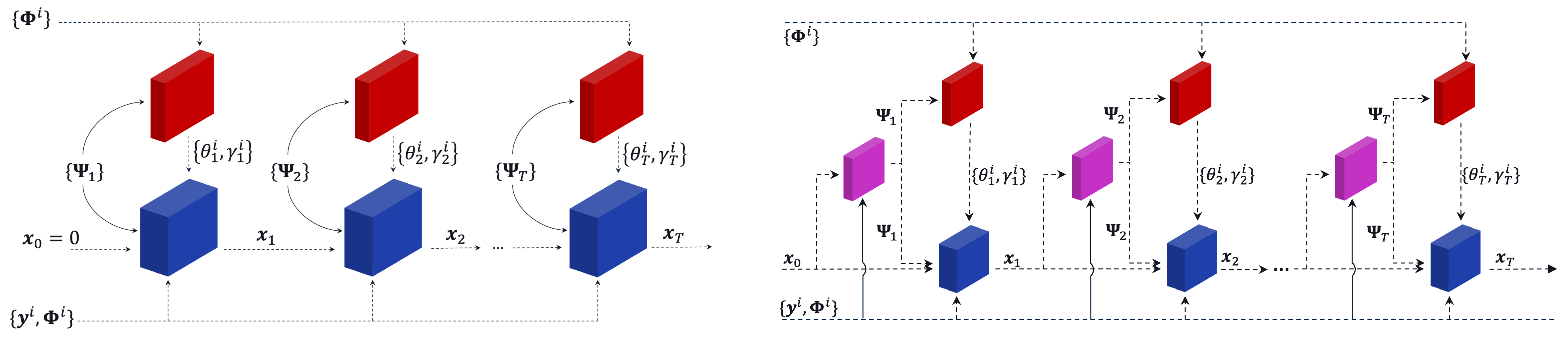}
\end{center}
\caption{\textbf{Models architectures}.
\textbf{Left:} \ac{A-DLISTA} architecture. Each blue block represents a single ISTA-like iteration parametrized by the dictionary $\mPsi_t$, the threshold and step size $\{\theta_t^i, \gamma_t^i\}$. The red blocks represent the augmentation network (with shared parameters across layers) that adapts $\{\theta_t^i, \gamma_t^i\}$ for layer $t$ based on the dictionary $\mPsi_t$ and the current measurement setup $\mPhi^i$ for the $i-$th data sample. \fv{\textbf{Right:} \ac{VLISTA} (inference) architecture. The red and blue blocks correspond to the same operations as for A-DLISTA. The pink blocks represent the posterior model used to refine the dictionary based on input data $\{\vy^i, \mPhi^i\}$ and the sparse vector reconstructed at layer $t$, $\mathbf{x}_t$.}}
\label{fig:dlista_aug}
\end{figure}

\fv{
% When the sensing matrix is fixed, the network can hopefully find good choices for parameters through end-to-end training. However, when the sensing matrix $\mPhi$ changes across different data samples, i.e., $\mPhi \rightarrow \mPhi^i$, it is not guaranteed anymore that there is a unique choice of $\theta_t^i$ and $\gamma_t^i$ for all different $\mPhi^i$. This is supported by our theoretical analysis provided in \autoref{sec:convergence}. Since these parameters can be determined for a given $\mPhi^i$ and $\mPsi_t$, we propose using an augmentation network that determines $\theta_t^i$ and $\gamma_t^i$ from each pair of $\mPhi^i$ and $\mPsi_t$. 
% We report in \autoref{sec:convergence} the full theoretical discussion.
Upon examining Proposition~\ref{thm:convergence_main}, it becomes evident that $\gamma_t$ and $\theta_t$ play a key role in the convergence of the algorithm. However, there is a trade-off to consider when making these choices. For instance, suppose we set $\theta_t$ and decrease $\gamma_t$. In that case, we may ensure good support selection, but it could also increase $\delta(\gamma_t)$. In situations where the sensing matrix remains fixed, the network can possibly learn optimal choices through end-to-end training. However, when the sensing matrix $\mPhi$ differs across various data samples (i.e., $\mPhi \rightarrow \mPhi^i$), it is no longer guaranteed that there exists a unique choice of $\gamma_t$ and $\theta_t$ for all $\mPhi^i$. Since these parameters can be determined when $\mPhi$ and $\mPsi_t$ are fixed, we suggest utilizing an augmentation network to determine $\gamma_t$ and $\theta_t$ from each pair of $\mPhi^i$ and $\mPsi_t$. For a more thorough theoretical analysis, please refer to~\autoref{sec:convergence}.
}
We show in \autoref{fig:dlista_aug} (left plot) the resulting A-DLISTA model. 
At each layer, \ac{A-DLISTA} performs two basic operations, namely, soft-threshold (blue blocks in \autoref{fig:dlista_aug}) and augmentation (red blocks in \autoref{fig:dlista_aug}). The former represents an \ac{ISTA}-like iteration parametrized by the set of weights $\{\mPsi_t, \theta_t^i, \gamma_t^i\}$, whilst the latter is implemented using a convolutional neural network. As shown in the figure, the augmentation network takes as input the sensing matrix for the given data sample, $\mPhi^i$, together with the dictionary learned at the layer for which the augmentation model will generate the $\theta_t^i$ and $\gamma_t^i$ parameters: $(\theta_t^i, \gamma_t^i) = f_\mTheta(\mPhi^i, \mPsi_t)$, where $\mTheta$ are the augmentation models' trainable parameters. Through such an operation, \ac{A-DLISTA} adapts the soft-threshold and step size of each layer to the current data sample. The inference algorithmic description of A-DLISTA is given in Algorithm~\ref{alg:adlista}.
We report more details about the augmentation network in the supplementary materials (\autoref{sec:app_impl}).

\begin{figure}[t]
\centering
\begin{minipage}[t]{0.49\textwidth}
\begin{algorithm}[H]
\centering
\caption{Augmented Dictionary Learning ISTA (A-DLISTA) - Inference Algorithm}
\label{alg:adlista}
\begin{algorithmic}
% \KwLearnable{Dictionaries: {$\{\mPsi_t\}_t$}, augmentation model's parameters: $\mTheta$}
\Require $\mathcal{D}=\{(\vy^i, \mPhi^i)\}_{i=0}^{N-1}$ - the sensing matrix changes across samples; augmentation model $f_\mTheta$
\State $\vx^i_0\leftarrow 0$ 
    \For {$t=1, \dots, T$}
        \State ($\theta_t^i, \gamma_t^i)\leftarrow f_\mTheta(\mPhi^i,\mPsi_t)$ \Comment{Augmentation step} 
        \State $g \leftarrow \vy^i - (\mPhi^i\mPsi_t)^T\mPhi^i\mPsi_t\vx^i_{t-1}$
        \State $u \leftarrow \vx^i_{t-1} + \gamma_t^i g$ 
        \State $\vx^i_t = \eta_{\theta_t^i}(u)$
    \EndFor
\State \Return $\vx^i_T$;
\end{algorithmic}
\end{algorithm}
\end{minipage}
\hfill
\begin{minipage}[t]{0.49\textwidth}
\begin{algorithm}[H]
\centering
\caption{Variational Learning ISTA (VLISTA) - Inference Algorithm}
\label{alg:vlista}
\begin{algorithmic}
% \KwLearnable{augmentation model's parameters: $\mTheta$; posterior model's parameters: $\vphi$}
\Require $\mathcal{D}=\{(\vy^i, \mPhi^i)\}_{i=0}^{N-1}$ - the sensing matrix changes across samples; augmentation model $f_\mTheta$; posterior model $f_\vphi$
\State $\vx^i_0\leftarrow 0$ 
    \For {$t=1, \dots, T$}
        \State $(\vmu_t, \vsigma_t)\leftarrow f_\vphi(\vx_{t-1}^i,\vy^i,\mPhi^i)$ \Comment{Posterior parameters estimation}
        \State $\mPsi_t \sim \mathcal{N}(\mPsi_t|\vmu_t, \vsigma_t)$ \Comment{Dictionary sampling}
        \State ($\theta_t^i, \gamma_t^i)\leftarrow f_\mTheta(\mPhi^i,\mPsi_t)$ \Comment{Augmentation step} 
        \State $g \leftarrow \vy^i - (\mPhi^i\mPsi_t)^T\mPhi^i\mPsi_t\vx^i_{t-1}$
        \State $u \leftarrow \vx^i_{t-1} + \gamma_t^ig$
        \State $\vx^i_t = \eta_{\theta_t^i}(u)$
    \EndFor
\State \Return $\vx^i_T$;
\end{algorithmic}
\end{algorithm}
\end{minipage}
\end{figure}

\subsection{Variational Learning ISTA}\label{sec:vlista_sec}

Although \ac{A-DLISTA} possesses adaptivity to data samples, it still assumes the existence of a ground truth dictionary. We relax such a hypothesis by defining a probability distribution over $\mPsi_t$ and formulating a variational approach, titled \ac{VLISTA}, to solve the dictionary learning and sparse recovery problems jointly. 
To forge our variational framework whilst retaining the helpful adaptivity property of \ac{A-DLISTA}, we re-interpret the soft-thresholding layers of the latter as part of a likelihood model defining the output mean for the reconstructed signal. Given its recurrent-like structure \citep{chung2015recurrent}, we equip \ac{VLISTA} with a conditional trainable prior where the condition is given by the dictionary sampled at the previous iteration. Therefore, the full model comprises three components, namely, the conditional prior $p_\xi(\cdot)$, the variational posterior $q_\phi(\cdot)$, and the likelihood model, $p_\Theta(\cdot)$.
All components are parametrized by neural networks whose outputs represent the parameters for the underlying probability distribution. In what follows, we describe more in detail the various building blocks of the \ac{VLISTA} model.

\subsubsection{Prior Model}\label{sec:prior_vlista}

The conditional prior, $p_\xi(\mPsi_t|\mPsi_{t-1})$, is modelled as a Gaussian distribution whose parameters are conditioned on the previously sampled dictionary. We implement 
$p_\xi(\cdot)$ as a neural network, $f_\xi(\cdot)=[f^\mu_{\xi_1} \circ g_{\xi_0}(\cdot), f^{\sigma^2}_{\xi_2} \circ g_{\xi_0}(\cdot)]$, with trainable parameters $\xi=\{\xi_0, \xi_1, \xi_2\}$. The model's architecture comprises a shared convolutional block followed by two branches generating the Gaussian distribution's mean and standard deviation, respectively. Therefore, at layer $t$, the prior conditional distribution is given by: $p_\xi(\mPsi_t|\mPsi_{t-1})=\prod_{i, j}\mathcal{N}(\mPsi_{t;i,j}|\mu_{t;i,j}=f^\mu_{\xi_1}(g_{\xi_0}(\mPsi_{t-1}))_{i,j}; \sigma_{t;i,j}=f^{\sigma^2}_{\xi_2}(g_{\xi_0}(\mPsi_{t-1}))_{i,j})$, where the indices $i, j$ run over the rows and columns of $\mPsi_t$. To simplify our expressions, we will abuse notation and refer to distributions like the former one as:
\begin{align}\label{eq:prior}
p_\xi(\mPsi_t|\mPsi_{t-1})=\mathcal{N}(\mPsi_t|\vmu_t; \vsigma_t), \quad \mathrm{where} \\ \nonumber
\vmu_t=f^\mu_{\xi_1}(g_{\xi_0}(\mPsi_{t-1})); \ \ \ \ 
\vsigma_t=f^{\sigma^2}_{\xi_2}(g_{\xi_0}(\mPsi_{t-1})) 
\end{align}
We will use the same type of notation throughout the rest of the manuscript to simplify formulas. The prior design allows for enforcing a dependence of the dictionary at iteration $t$ to the one sampled at the previous iteration. Thus allowing us to refine $\mPsi_t$ as the iterations proceed. The only exception is the prior imposed over the dictionary at $t=1$, where there is no previously sampled dictionary. To handle this exception, we assume a standard Gaussian distributed $\mPsi_{1}$. The joint prior distribution over the dictionaries for \ac{VLISTA} is given by:
\begin{align}\label{eq:prior2}
    p_\xi(\mPsi_{1:T}) 
    &= \mathcal{N}(\mPsi_1|\mathbf{0}; \mathbf{1}) \prod_{t=2}^T\mathcal{N}(\mPsi_t|\vmu_t; \vsigma_t) %\\ \nonumber
    %\mathrm{where} \\ \nonumber
    %\vmu_t &= f^\vmu_{\xi_1}(g_{\xi_0}(\mPsi_{t-1}) \\ \nonumber
    %\vsigma_t &= f^{\sigma^2}_{\xi_2}(g_{\xi_0}(\mPsi_{t-1})) \nonumber
\end{align}
where $\vmu_t$ and $\vsigma_t$ are defined in \autoref{eq:prior}.

\subsubsection{Posterior Model}\label{sec:posterior_vlista}

Similarly to the prior model, the variational posterior too is modelled as a Gaussian distribution parametrized by a neural network $f_\phi(\cdot)=[f^\mu_{\phi_1} \circ h_{\phi_0}(\cdot), f^{\sigma^2}_{\phi_2} \circ h_{\phi_0}(\cdot)]$ that outputs the mean and variance for the underlying probability distribution
\begin{align}
q_\phi (\mPsi_t | \vx_{t-1}, \vy^i, \mPhi^i) = \mathcal{N}(\mPsi_t | \vmu_t; \vsigma_t), \quad \mathrm{where} \\ \nonumber
\vmu_t =f^\mu_{\phi_1}(h_{\phi_0}(\vx_{t-1}, \vy^i, \mPhi^i));\ \ \ \ 
\vsigma_t = f^{\sigma^2}_{\phi_2}(h_{\phi_0}(\vx_{t-1}, \vy^i, \mPhi^i))
\end{align}
%$q_\phi (\mPsi_t | \hat{\vx}_{t-1}, \vy^i, \mPhi^i) = \mathcal{N}(\mPsi_t | \vmu=f^\mu_{\phi_1}(h_{\phi_0}(\hat{\vx}_{t-1}, \vy^i, \mPhi^i)); \vsigma=f^{\sigma^2}_{\phi_2}(h_{\phi_0}(\hat{\vx}_{t-1}, \vy^i, \mPhi^i)))$. 
The posterior distribution for the dictionary, $\mPsi_t$, at layer $t$ is conditioned on the data, $\{\vy^i, \mPhi^i\}$, as well as on the reconstructed signal at the previous layer, $\vx_{t-1}$. Therefore, the joint posterior probability over the dictionaries is given by:
\begin{align}
    q_\phi(\mPsi_{1:T} | \vx_{1:T}, \vy^i, \mPhi^i) 
    = \prod_{t=1}^{T}q_\phi(\mPsi_t | \vx_{t-1}, \vy^i, \mPhi^i) 
    %= \prod_{t=1}^{T} \mathcal{N}(\mPsi_t | \vmu=f^\mu_{\phi_1}(h_{\phi_0}(\hat{\vx}_{t-1}, \vy^i, \mPhi^i)); \vsigma=f^{\sigma^2}_{\phi_2}(h_{\phi_0}(\hat{\vx}_{t-1}, \vy^i, \mPhi^i))) 
\end{align}

When considering our selection of Gaussian distributions for our prior and posterior models, we prioritized computational and implementation convenience. However, it's important to note that our framework is not limited to this distribution family. As long as the distributions used are reparametrizable~\citep{kingma2013auto}, meaning that we can obtain gradients of random samples with respect to their parameters and we can evaluate and differentiate their density, VLISTA can support any flexible distribution family. This includes mixtures of Gaussians to incorporate heavier tails and distributions resulting from normalizing flows \citep{rezende2015variational}.

\subsubsection{Likelihood Model}\label{sec:likelihood_vlista}

The soft-thresholding block of \ac{A-DLISTA} is at the heart of the reconstruction module. Similarly to the prior and posterior, the likelihood distribution is modelled as a Gaussian parametrized by the output of an \ac{A-DLISTA} block. In particular, the network generates the mean vector for the Gaussian distribution while we treat the standard deviation as a tunable hyperparameter. 
% Therefore, we interpret the reconstructed sparse vector at a given layer as the mean of the likelihood distribution. 
By combining these elements, we can formulate the joint log-likelihood distribution as:

\begin{align}\label{eq:jllk}
\mathrm{log}\ p_\Theta (\vx_{1:T}=\vx_{gt}^i | \mPsi_{1:T}, \vy^i, \mPhi^i) = \sum_{t=1}^T \mathrm{log}\ \mathcal{N}(\vx_{gt}^i | \vmu_t, \vsigma_t), \quad \mathrm{where} \\ \nonumber
\vmu_t=\mathrm{\ac{A-DLISTA}}(\mPsi_{t}, \vx_{t-1}, \vy^i, \mPhi^i; \Theta); \ \ \ \ 
\vsigma_t=\delta
\end{align}

where $\delta$ is a hyperparameter of the network, $\vx_{gt}^i$ represents the ground truth value for the underlying unknown sparse signal for the $i$-th data sample, $\Theta$ is the set of \ac{A-DLISTA}'s parameters, and $p_\Theta (\vx_{1:T}=\vx_{gt}^i | \cdot)$ represents the likelihood for the ground truth $x_{gt}^i$, at each time step $t\in[1, T]$, under the likelihood model given the current reconstruction.
% , assuming the ground truth $\vx_{gt}^i$. 
Note that in~\autoref{eq:jllk} we use the same $\vx_{gt}^i$ through the entire sequence $t\in[1, T]$. We report in \autoref{fig:dlista_aug} (right plot) the inference architecture for VLISTA. It is crucial to note that during inference VLISTA does not require the prior model which is used for training only.
Additionally, its graphical model and algorithmic description are shown in \autoref{fig:vlista_gm} and Algorithm~\ref{alg:vlista}, respectively. We report further details concerning the architecture of the models and the objective function in the supplementary material (\autoref{sec:app_impl} and \autoref{sec:app_training}).

\begin{figure}[t]
\begin{center}
\includegraphics[width=0.6\linewidth]{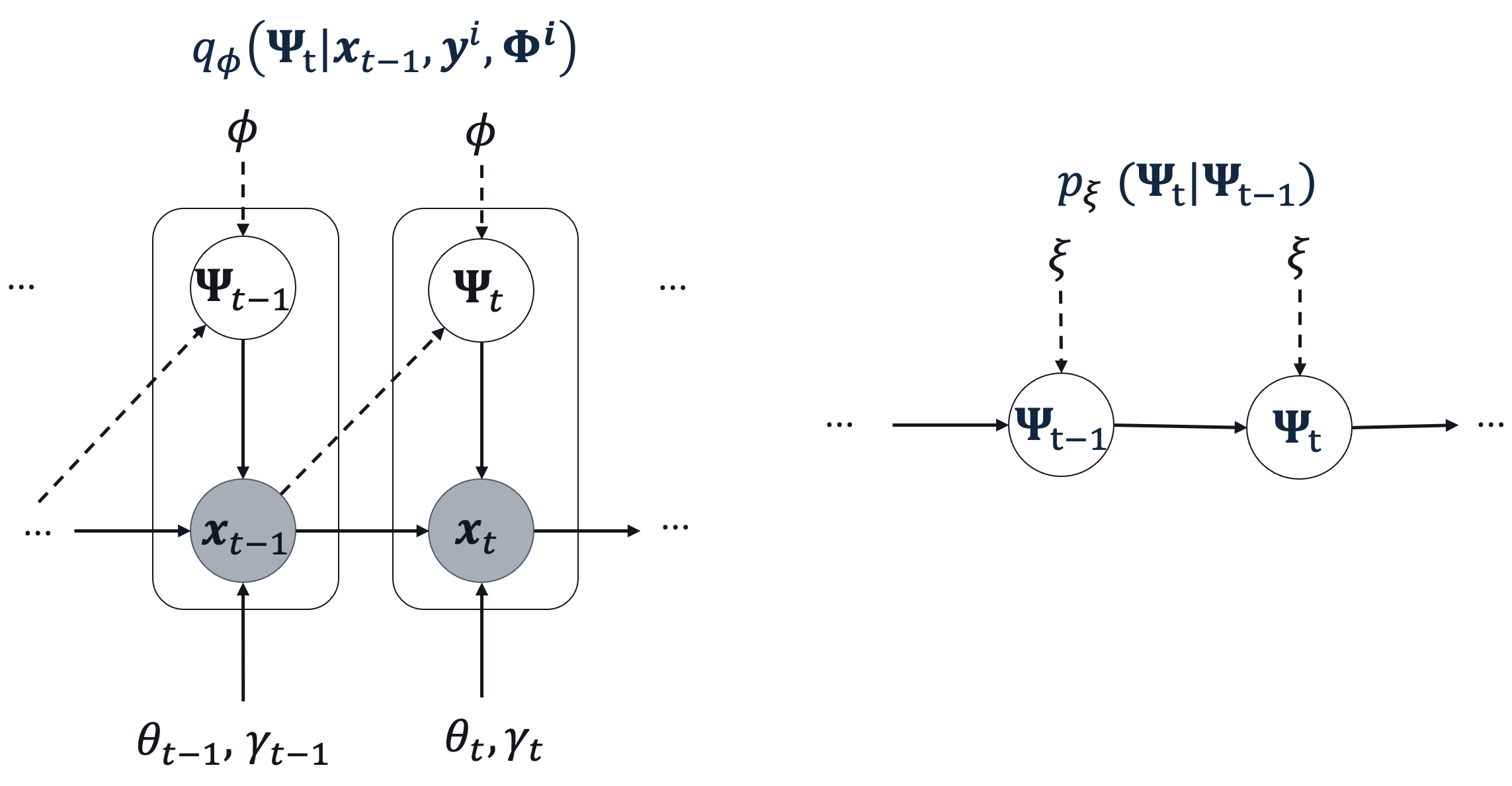}
\end{center}
\caption{\textbf{VLISTA graphical model}. Dependencies on $\vy^i$ and $\mPhi^i$ are factored out for simplicity. The sampling is done only based on the posterior $q_\vphi(\mPsi_t|\vx_{t-1},\vy^i,\mPhi^i)$. Dashed lines represent variational approximations.}
    \label{fig:vlista_gm}
\end{figure}

\subsubsection{Training Objective} 

\fv{Our approach involves training all VLISTA components in an end-to-end fashion. To accomplish that we maximize the \ac{ELBO}:}

\begin{flalign}\label{eqn:obj_func}
\mathrm{\ac{ELBO}} &= \sum_{t=1}^{T}\mathbb{E}_{\mPsi_{1:t}\sim q_\phi(\mPsi_{1:t}|\vx_{0:t-1}, \mD^i}) \Big[\mathrm{log}\ p_\Theta(\vx_t=\vx_{gt}^i|\mPsi_{1:t}, \mD^i) \Big] \\ \nonumber
&- \sum_{t=2}^{T} \mathbb{E}_{\mPsi_{1:t-1}\sim q_\phi(\mPsi_{1:t-1}|\vx_{t-1}, \mD^i}) \Big[ D_{KL} \Big( q_\phi(\mPsi_t|\vx_{t-1}, \mD^i) \parallel p_\xi(\mPsi_t|\mPsi_{t-1}) \Big) \Big] \\ \nonumber
&- D_{KL}\Big( q_\phi(\mPsi_1|\vx_0, \mD^i) \parallel p(\mPsi_1) \Big)
\end{flalign}

\fv{As we can see from \autoref{eqn:obj_func}, the ELBO comprises three terms. The first term is the sum of expected log-likelihoods of the target signal at each time step. The second term is the sum of KL divergences between the approximate posterior and the prior at each time step. The third term is the KL divergence between the approximate posterior at the initial time step and a prior. In our implementation, we set to ``$T$'' the number of layers and initialize the input signal to zero. 
% Our model takes in the dataset $\mD^i=\{\vy^i, \mPhi^i\}$, which consists of the input signal and a dictionary of basis functions.
}

% \begin{wrapfigure}{r}{0.55\textwidth} 
%     \vskip -0.5cm
% % \begin{figure}[hb!]
%     \centering
%     \includegraphics[width=\linewidth]{images/vlista_gm_4.png}
%     \caption{VLISTA graphical model. Dependencies on $\vy^i,\mPhi^i$ are factored out for simplicity. The sampling is done only based on the posterior $q_\vphi(\mPsi_t|\vx_{t-1},\vy^i,\mPhi^i)$. Dashed lines represent variational approximations.}
%     \label{fig:vlista_gm}
%     \vskip -1.5cm
% \end{wrapfigure}

\fv{To evaluate the likelihood contribution in \autoref{eqn:obj_func}, we marginalize over dictionaries sampled from the posterior $q_\phi(\mPsi_{1:t}|\vx_{0:t-1}, \mD^i)$. In contrast, the last two terms in the equation represent the KL divergence contribution between the prior and posterior distributions. It's worth noting that the prior in the last term is not conditioned on the previously sampled dictionary, given that $p_\xi(\mPsi_1) \rightarrow p(\mPsi_1) = \mathcal{N}(\mPsi_1|\mathbf{0}; \mathbf{1})$ (refer to \autoref{eq:prior} and \autoref{eq:prior2}).
We refer the reader to \autoref{sec:app_training} for the derivstion of the ELBO.}
\section{Experiments}
\label{sec:exp_results}

\subsection{Datasets and Baselines}\label{sec:datasets_and_baselines}
We evaluate our models' performance by comparing them against classical and ML-based baselines on three datasets: MNIST, CIFAR10, and a synthetic dataset. Concerning the synthetic dataset, we follow a similar approach as in~\citet{chen_theoretical_2018,liu2019alista,behrens_neurally_2021}. \fv{However, in contrast to the mentioned works, we generate a different $\mPhi$ matrix for each datum by sampling i.i.d. entries from a standard Gaussian distribution. We generate the ground truth sparse signals by sampling the entries from a standard Gaussian and setting each entry to be non-zero with a probability of 0.1. We generate 5K samples and use 3K for training, 1K for model selection, and 1K for testing.} 
Concerning the MNIST and CIFAR10, we train the models using the full images, without applying any crop. For CIFAR10, we gray-scale and normalize the images. 
% To simulate a scenario with varying $\mPhi^i$, we generate a sensing matrix for each data sample by randomly sampling its entries from a standard Gaussian distribution. 
We generate the corresponding observations, $\vy^i$, by multiplying each sensing matrix with the ground truth image: $\vy^i=\mPhi^i \vs^i$.
We compare the A-DLISTA and VLISTA models against classical and ML baselines. Our classical baselines use the ISTA algorithm, and we pre-compute the dictionary by either considering the canonical or the wavelet basis or using the SPCA algorithm. Our ML baselines use different unfolded learning versions of ISTA, such as LISTA. To demonstrate the benefits of adaptivity, we perform an ablation study on A-DLISTA by removing its augmentation network and making the parameters ${\theta_t, \gamma_t}$ learnable only through backpropagation. We refer to the non-augmented version of A-DLISTA as DLISTA. Therefore, for DLISTA, $\theta_t$ and $\gamma_t$ cannot be adapted to the specific input sensing matrix. Moreover, we consider BCS\fv{~\citep{ji_bayesian_2008}} as a specific Bayesian baseline for VLISTA. Finally, we conduct Out-Of-Distribution (OOD) detection experiments.
We fixed the number of layers to three for all ML models to compare their performance. The classical baselines do not possess learnable parameters. Therefore, we performed an extensive grid search to find the best hyperparameters for them. More details concerning the training procedure and ablation studies can be found in \autoref{sec:app_training} and \autoref{sec:additional_results}.

\subsection{Synthetic Dataset} 

\begin{wrapfigure}{r}{0.55\textwidth}
    \vskip -1.1cm
    \includegraphics[width=0.55\textwidth]{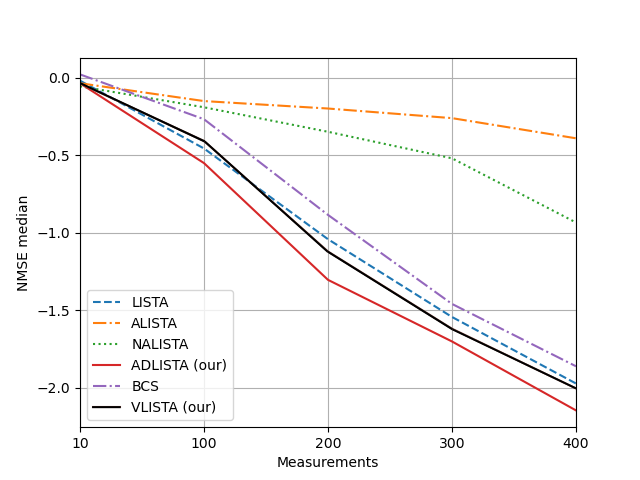}
    \caption{\textbf{NMSE's median}. The $y$-axes is in dB (the lower the better) for a different number of measurements ($x$-axes).}
    \label{fig:quantiles_synth_measurements}
    \vskip -0.2cm
\end{wrapfigure}

Regarding the synthetic dataset, we evaluate models performance by computing the median of the c.d.f. for the reconstruction NMSE (\autoref{fig:quantiles_synth_measurements}). 
A-DLISTA's adaptivity appears to offer an advantage over other models. However, concerning VLISTA, we observe a drop in performance. Such a behaviour is consistent across experiments and can be attributed to a few factors. One possible reason for the drop in performance is the noise introduced during training due to the random sampling procedure used to generate the dictionary. Additionally, the amortization gap that affects all models based on amortized variational inference \citep{cremer2018inference} can also contribute to this effect. Despite this, VLISTA still performs comparably to BCS. Lastly, we note that ALISTA and NALISTA do not perform as well as other models. This is likely due to the optimization procedure these two models require to evaluate the weight matrix $\mW$. The computation of the $\mW$ matrix requires a fixed sensing matrix, a condition not satisfied in the current setup. Regarding non-static measurements, we averaged across multiple $\mPhi^i$, thus obtaining a non-optimal $W$ matrix. To support our hypothesis, we report in \autoref{sec:additional_results} results considering a static measurement scenario for which ALISTA and NALISTA report very high performance.

\subsection{Image Reconstruction - MNIST \& CIFAR10}

When evaluating different models on the MNIST and CIFAR10 datasets, we use the Structural Similarity Index Measure (SSIM) to measure their performance. As for the synthetic dataset, we experience strong instabilities in ALISTA and NALISTA training due to non-static measurement setup. Therefore, we do not provide results for these models. It is important to note that the poor performance of ALISTA and NALISTA is a result of our specific experiment setup, which differs from the static case considered in the formulation of these models. We refer to \autoref{sec:additional_results} for results using a static measurements scenario. 
By looking at the results in \autoref{tab:ssim_mnist_measurements} and \autoref{tab:ssim_cifar10_measurements}, we can draw similar conclusions as for the synthetic dataset. Additionally, we report results from three classical baselines (\autoref{sec:datasets_and_baselines}).
Among non-Bayesian models, A-DLISTA shows the best results. Furthermore, by comparing A-DLISTA with its non-augmented version, DLISTA, one can notice the benefits of using an augmentation network to make the model adaptive. Concerning Bayesian approaches, VLISTA outperforms BCS. However, it is important to note that BC does not have trainable parameters, unlike VLISTA. Therefore, the higher performance of VLISTA comes at the price of an expensive training procedure. Similar to the synthetic dataset, VLISTA exhibits a drop in performance compared to A-DLISTA for MNIST and CIFAR10. 
% However, it is noteworthy that compared to all other non-Bayesian approaches, VLISTA can detect out-of-distribution samples, which is crucial for some domain-specific applications such as wireless communication.

\begin{table}[ht]
\caption{\textbf{MNIST SSIM (the higher the better) for different number of measurements}. First three rows correspond to ``classical\fv{''} baselines. We highlight in bold the best performance for Bayes and Non-Bayes models.}

\label{tab:ssim_mnist_measurements}
% \vskip 0.15in
\begin{center}

\begin{threeparttable}

\begin{tabularx}{\linewidth}{@{}llYYYYY@{}}
\toprule

&\multirow{2}{*}{} & \multicolumn{5}{c}{SSIM $\uparrow$} \\
&\multirow{2}{*}{} & \multicolumn{5}{c}{} \\
&\multirow{2}{*}{} & \multicolumn{5}{c}{number of measurements} \\ \cline{3-7}
&\multirow{2}{*}{} & 1 & 10 & 100 & 300 & 500 \\ 
&\multirow{2}{*}{} & ($\times e^{-1}$) & ($\times e^{-1}$) & ($\times e^{-1}$) & ($\times e^{-1}$) & ($\times e^{-1}$) \\  
\midrule

\multirow{6}{*}{\STAB{\rotatebox[origin=c]{90}{Non-Bayes}}} 

&\multicolumn{1}{l|}{Canonical} & $0.39_{\pm0.12}$ & $0.56_{\pm0.04}$ & $2.20_{\pm0.04}$ & $3.75_{\pm0.05}$ & $4.94_{\pm0.06}$  \\

% & \multicolumn{1}{l|}{Canonical$^{\dagger}$} & $0.97_{\pm0.14$ & $2.00_{\pm0.18$ & $5.82_{\pm0.30$ & $9.56_{\pm0.06$ & $9.90_{\pm0.03$ \\

&\multicolumn{1}{l|}{Wavelet} & $0.40_{\pm0.09}$ & $0.56_{\pm0.06}$ & $2.30_{\pm0.06}$ & $3.90_{\pm0.05}$ & $5.05_{\pm0.01}$  \\
% &\multicolumn{1}{l|}{Wavelet$^{\dagger}$} & $0.88_{\pm0.08$ & $2.33_{\pm0.16$ & $8.25_{\pm0.11$ & $9.69_{\pm0.02$ & $9.89_{\pm0.04$ \\
  
&\multicolumn{1}{l|}{SPCA} & $0.45_{\pm0.11}$ & $0.65_{\pm0.06}$ & $2.72_{\pm0.06}$ & $3.52_{\pm0.08}$ & $4.98_{\pm0.08}$  \\ %\cdashline{2-7} 

% \multirow{5}{*}{\STAB{\rotatebox[origin=c]{90}{ML}}}
% %& \multicolumn{1}{l|}{\ac{ISTA}$^{\dagger}$} & 0.10 & 0.07 & 0.06 & 0.20 & 0.34 \\

% &\multicolumn{1}{l|}{\ac{LISTA}} & $1.36_{\pm0.07$ & $2.85_{\pm0.05$ & $4.78_{\pm0.01$ & $5.08_{\pm0.07$ & $6.17_{\pm0.01$ \\
&\multicolumn{1}{l|}{\ac{LISTA}} & $\bf 0.96 _{\pm 0.01}$ & $1.11 _{\pm 0.01}$ & $3.70 _{\pm 0.01}$ & $5.36 _{\pm 0.01}$ & $6.31 _{\pm 0.01}$ \\

% &\multicolumn{1}{l|}{\ac{ALISTA}} & $0.90 _{\pm 0.01$ & $0.60 _{\pm 0.01$ & $0.85 _{\pm 0.01$ & $0.88 _{\pm 0.01$ & $0.91 _{\pm 0.01$ \\

% &\multicolumn{1}{l|}{\ac{NALISTA}} & $0.96 _{\pm 0.01$ & $0.96 _{\pm 0.01$ & $0.96 _{\pm 0.01$ & $0.96 _{\pm 0.01$ & $1.04 _{\pm 0.02$ \\

% & \multicolumn{1}{l|}{\ac{DLISTA}} & 0.27 & 0.60 & 0.64 & 0.64 & 0.76 \\
&\multicolumn{1}{l|}{DLISTA} & $\bf 0.96_{\pm0.01}$ & $1.09_{\pm0.01}$ & $4.01_{\pm0.02}$ & $5.57_{\pm0.01}$ & $6.26_{\pm0.01}$ \\

% & \multicolumn{1}{l|}{\ac{A-DLISTA} (our)}  & \bf 0.33 & \bf 0.66 & \bf 0.81 & \bf 0.84 & \bf 0.88  \\
&\multicolumn{1}{l|}{\ac{A-DLISTA} (our)}  & $\bf 0.96_{\pm0.01}$ & $\bf 1.17_{\pm0.01}$ & $\bf 4.79_{\pm0.01}$ & $\bf 6.15_{\pm 0.01}$ & $\bf 6.70 _{\pm0.01}$ \\

\midrule

\multirow{2}{*}{\STAB{\rotatebox[origin=c]{90}{\small{Bayes}}}}

% &\multicolumn{1}{l|}{BCS} & $0.07 _{\pm 0.01$ & $0.02 _{\pm 0.04$ & $1.00 _{\pm 0.01$ & $5.24 _{\pm 0.02$ & $8.76 _{\pm 0.02$  \\
&\multicolumn{1}{l|}{BCS} & $0.05 _{\pm 0.01}$ & $0.60 _{\pm 0.01}$ & $1.10 _{\pm 0.01}$ & $4.48 _{\pm 0.02}$ & $\bf 6.23 _{\pm 0.02}$ \\

&\multicolumn{1}{l|}{\ac{VLISTA} (our)} & $\bf 0.80_{\pm0.03}$ & $\bf 0.94_{\pm0.02}$ & $\bf 3.29_{\pm0.01}$ & $\bf 4.73_{\pm0.01}$ & $6.02_{\pm0.01}$  \\

\bottomrule

\end{tabularx}

\end{threeparttable}

\end{center}
% \vskip -0.1in
\end{table}

\begin{table}[ht]
\caption{\textbf{CIFAR10 SSIM (the higher the better) for different number of measurements}. First three rows correspond to ``classical\fv{''} baselines. We highlight in bold the best performance for Bayes and Non-Bayes models.} \label{tab:ssim_cifar10_measurements}

\begin{center}

\begin{threeparttable}
    
\begin{tabularx}{\linewidth}{@{}llYYYYY@{}}
\toprule

&\multirow{2}{*}{} & \multicolumn{5}{c}{SSIM $\uparrow$} \\
&\multirow{2}{*}{} & \multicolumn{5}{c}{} \\
&\multirow{2}{*}{} & \multicolumn{5}{c}{number of measurements} \\ \cline{3-7}
&\multirow{2}{*}{} & 1 & 10 & 100 & 300 & 500 \\ 
&\multirow{2}{*}{} & ($\times e^{-1}$) & ($\times e^{-1}$) & ($\times e^{-1}$) & ($\times e^{-1}$) & ($\times e^{-1}$) \\ 
\midrule

\multirow{6}{*}{\STAB{\rotatebox[origin=c]{90}{Non-Bayes}}}

&\multicolumn{1}{l|}{Canonical} & $0.17_{\pm0.10}$ & $0.21_{\pm0.02}$ & $0.33_{\pm0.02}$ & $0.47_{\pm0.02}$ & $0.58_{\pm0.03}$   \\ 
% &\multicolumn{1}{l|}{Canonical$^{\dagger}$} & $0.84\pm0.21$ & $0.45\pm0.04$ & $1.74\pm0.10$ & $3.04\pm0.09$ & $4.46\pm0.14$  \\ 

&\multicolumn{1}{l|}{Wavelet} & $0.23_{\pm0.22}$ & $0.42_{\pm0.02}$ & $1.44_{\pm0.06}$ & $2.52_{\pm0.09}$ & $3.43_{\pm0.08}$  \\
% &\multicolumn{1}{l|}{Wavelet$^{\dagger}$} & $0.71\pm0.20$ & $0.70\pm0.01$ & $2.21\pm0.11$ & $4.42\pm0.15$ & $7.20\pm0.07$  \\ 

&\multicolumn{1}{l|}{SPCA} & $0.31_{\pm0.19}$ & $0.43_{\pm0.02}$ & $1.53_{\pm0.04}$ & $2.66_{\pm0.08}$ & $3.58_{\pm0.07}$  \\ %\cdashline{2-7} 
% \multirow{5}{*}{\STAB{\rotatebox[origin=c]{90}{ML}}}

%& \multicolumn{1}{l|}{\ac{ISTA}$^{\dagger}$} & 0.001 & 0.004 & 0.009 & 0.025 & 0.035 & 0.040 & 0.041 \\
% & \multicolumn{1}{l|}{\ac{LISTA}} & 0.008 & 0.227 & 0.271 & 0.597 & 0.658 & 0.716 & 0.779 \\

% &\multicolumn{1}{l|}{\ac{LISTA}} & $2.47\pm0.10$ & $2.78\pm0.02$ & $3.02\pm0.01$ & $4.37\pm0.01$ & $5.06\pm0.01$ \\
&\multicolumn{1}{l|}{\ac{LISTA}} & $\bf 1.34_{\pm 0.02}$ & $1.67_{\pm 0.02}$ & $3.10_{\pm 0.01}$ & $4.20_{\pm 0.01}$ & $4.71_{\pm 0.01}$ \\

% &\multicolumn{1}{l|}{\ac{ALISTA}} & $1.26 \pm 0.02$ & $0.65 \pm 0.01$ & $0.48 \pm 0.01$ & $0.50\pm 0.02$ & $0.60 \pm 0.02$  \\

% &\multicolumn{1}{l|}{\ac{NALISTA}} & $1.34 \pm 0.02$ & $1.35 \pm 0.02$ & $1.35 \pm 0.02$ & $1.34 \pm 0.02$ & $1.41 \pm 0.02$  \\

&\multicolumn{1}{l|}{DLISTA} & $1.16_{ \pm 0.02}$ & $1.96_{ \pm 0.02}$ & $4.50_{ \pm 0.01}$ &  $ 5.15_{\pm 0.01}$ & $ 5.42_{\pm0.01} $  \\

&\multicolumn{1}{l|}{\ac{A-DLISTA} (our)} & $\bf 1.34_{ \pm 0.02}$ & $\bf 1.77_{ \pm 0.02}$ & $\bf 4.74_{ \pm 0.01}$ & $\bf 5.26_{\pm0.01} $ & $\bf 5.83_{\pm0.01} $  \\
\midrule

\multirow{2}{*}{\STAB{\rotatebox[origin=c]{90}{Bayes}}}

% \multirow{2}{*}{\STAB{\rotatebox[origin=c]{90}{Bayes}}}
% &\multicolumn{1}{l|}{BCS} & $-$ & $1.48 \pm 0.71$ & $2.31 \pm 0.25$ & $2.53 \pm 0.12$ & $3.01 \pm 0.10$  \\
&\multicolumn{1}{l|}{BCS} & $0.04_{ \pm 0.01}$ & $0.48_{ \pm 0.01}$ & $0.59_{ \pm 0.01}$ & $1.29_{ \pm 0.01}$ & $1.91_{ \pm 0.01}$ \\

% & \multicolumn{1}{l|}{\ac{VLISTA} (our)} & \bf 0.265 & \bf 0.286 & \bf 0.464 & \bf 0.617 & \bf 0.692 &  \bf 0.731 & \bf 0.774 \\
&\multicolumn{1}{l|}{\ac{VLISTA} (our)} & $\bf 0.86_{\pm0.03}$ & $\bf 1.25_{\pm0.03}$ & $\bf 3.59_{\pm0.02}$ & $\bf 4.01_{\pm0.01}$ & $\bf 4.36_{\pm0.01}$  \\
\bottomrule

\end{tabularx}

% \begin{tablenotes}
% $``-"$: Model did not converge
% \end{tablenotes}

\end{threeparttable}

\end{center}
% \vskip -0.1in

\end{table}

\subsection{Out Of Distribution Detection}

This section focuses on a crucial distinction between non-Bayesian models and VLISTA for solving inverse linear problems.
Unlike any non-Bayesian approach to compressed sensing, \ac{VLISTA} allows quantifying uncertainties on the reconstructed signals. This means that it can detect out-of-distribution samples without requiring ground truth data during inference. In contrast to other Bayesian techniques that design specific priors to meet the sparsity constraints after marginalization \citep{ji_bayesian_2008,zhou2014bayesian}, \ac{VLISTA} completely overcomes such an issue as the thresholding operations are not affected by the marginalization over dictionaries.
To prove that \ac{VLISTA} can detect OOD samples, we employ the MNIST dataset. 
\fv{First, we split the dataset into two distinct subsets - the In-Distribution (ID) set and the OOD. The ID set comprises images from three randomly chosen digits, while the OOD set includes images of the remaining digits. 
Then, we partitioned the ID set into training, validation, and test sets for \ac{VLISTA}. Once the model was trained, it was tasked with reconstructing images from the ID test and OOD sets. To assess the model's ability to detect OODs, we utilized a \emph{two-sample t-test}. We accomplished that by leveraging the per-pixel variance of the reconstructed ID, $\{var_{\sigma_{pp}}^{ID_{test};i}\}_{i=0}^{P-1}$, and OOD, $\{var_{\sigma_{pp}}^{OOD;i}\}_{i=0}^{P-1}$, images (with $P$ being the number of pixels). To compute the per-pixel variance, we reconstruct each image 100 times by sampling a different dictionary for each of trial. 
% Given 100 different reconstructions for each image, we compute the per-pixel variance across them. 
We then construct the empirical c.d.f. of the per-pixel variance for each image. By using the mean of the c.d.f. as a summary statistics, we can apply the \emph{two-sample t-test} to detect OOD samples. 
%Therefore, given $\mathcal{D}^{ID_{test}} = \{x_i^{ID_{test}}\}_{i=0}^{n-1}$ and $\mathcal{D}^{OOD} = \{x_i^{OOD}\}_{i=0}^{m-1}$, we compute $Var_{\sigma_{pp}}^{ID_{test}} = \{var_{\sigma_{pp}}^{ID_{test};i}\}_{i=0}^{n-1}$ and $\Var_{\sigma_{pp}}^{OOD} = \{var_{\sigma_{pp}}^{OOD;i}\}_{i=0}^{m-1}$. Once we have available the distributions for $\Var_{\sigma_{pp}}^{ID_{test}}$ and $\Var_{\sigma_{pp}}^{OOD}$, we can apply the two-sample t-test. 
We report the results in \autoref{fig:ood_pvalue}. As a reference $p$-value for rejecting the null hypothesis about the two variance distributions being the same, we consider a significance level equal to 0.05 (green solid line). }
We conducted multiple tests at different noise levels to assess the robustness of OOD detection to measure noise. For the current task, we used BCS as a baseline. However, due to the different nature of the BCS framework, we utilized a slightly different evaluation procedure to determine its $p$-values. We employed the same ID and OOD splits as VLISTA but considered the c.d.f. of the reconstruction error the model evaluates. The remainder of the process was identical to that of VLISTA. 
% Our findings for OOD detection are highlighted in \autoref{fig:ood_pvalue}.
%The figure shows that VLISTA outperforms BCS at each noise level, resulting in a lower p-value than BCS and a higher rejection power. As expected, as the signal-to-noise ratio (SNR) values increase, which indicates a more significant overlap between ID and OOD sample distributions, the p-value also increases, indicating that OOD rejection becomes more difficult. However, VLISTA can still detect OOD samples, while BCS fails when the SNR, expressed in decibels, is greater than 10. 
% To determine whether the model rejects OOD samples correctly, we included the 5\% line for the $p$-value in \autoref{fig:ood_pvalue}. This value is commonly used as a reference in hypothesis testing to determine whether to accept or reject the null hypothesis.

\begin{wrapfigure}{r}{0.55\textwidth} 
    \vskip -1.5cm
    \includegraphics[width=0.55\textwidth]{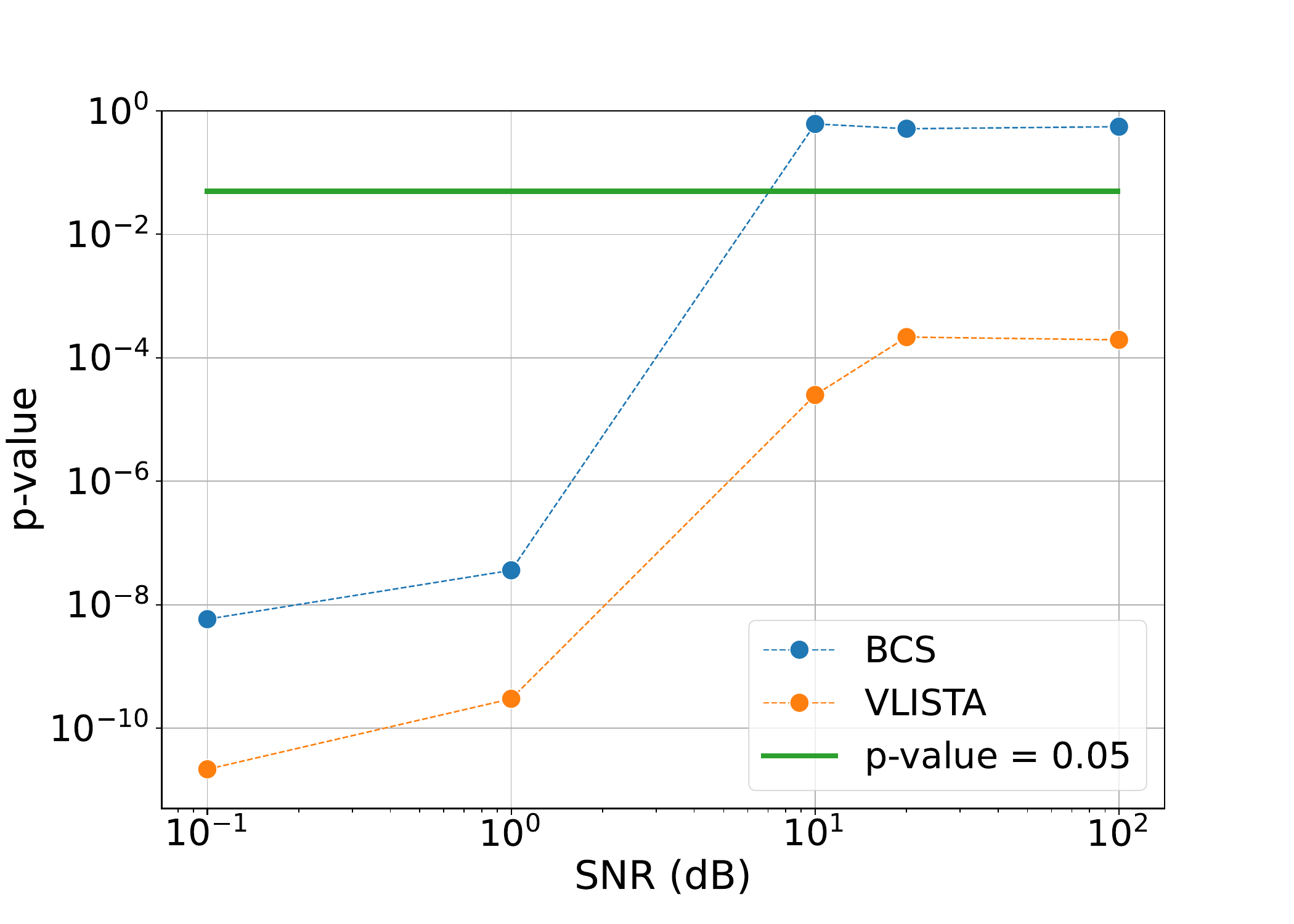}
    \caption{\textbf{p-value for OOD rejection as a function of the noise level}. The green line represents a reference p-value equal to 0.05.}
    \label{fig:ood_pvalue}
    \vskip -0.4cm
\end{wrapfigure}

%A key property for OOD, is its robustness to input noise. Specifically, we consider values for the SNR $\in [1.e^{-1}, 1.e^2]$. Given the problem formulation, we observe that the higher the SNR the more difficult is to distinguish between ID and OOD images. Such a counterintuitive effect 
\section{Conclusion}

\label{sec:conclusion}

Our study introduces a novel approach called VLISTA, which combines dictionary learning and sparse recovery into a single variational framework.
Traditional compressed sensing methods rely on a known ground truth dictionary to reconstruct signals. Moreover state-of-the-art LISTA-type of models, typically assume a fixed measurement setup.
 In our work, we relax both assumptions. First, we propose a soft-thresholding algorithm, termed A-DLISTA, that can handle different sensing matrices. We theoretically justify the use of an augmentation network to adapt the threshold and step size for each layer based on the current input and the learned dictionary. Finally, we propose a probabilistic assumption about the existence of a ground truth dictionary and use it to create the VLISTA framework. 
Our empirical results show that \ac{A-DLISTA} improves upon performances of classical and ML baselines in a non-static measurement scenario. 
Although \ac{VLISTA} does not outperform \ac{A-DLISTA}, it allows for uncertainty evaluation in reconstructed signals, a valuable feature for detecting out-of-distribution data. In contrast, none of the non-Bayesian models can perform such a task. 
Unlike other Bayesian approaches, VLISTA does not require specific priors to preserve sparsity after marginalization. Instead, the averaging operation applies to the sparsifying dictionary, not the sparse signal itself.
\section*{Impact Statement}
\fv{This work proposes two new models to jointly solve the dictionary learning and sparse recovery problems, especially concerning scenarios characterized by a varying sensing matrix. 
We believe the potential societal consequences of our work being chiefly positive, since it might contribute to a larger adoption of LISTA-type of models to applications requiring fast solutions to underdetermined inverse problems, especially concerning varying forward operators. 
Nonetheless, it is crucial to exercise caution and thoroughly comprehend the behavior of A-DLISTA and VLISTA, as with any other LISTA model, in order to obtain reliable predictions.}

\bibliography{references}
\bibliographystyle{tmlr}

\newpage

\appendix
\section{Appendix}
\section{Theoretical Analysis}\label{sec:convergence}

In this section, we report about theoretical motivations for the A-DLISTA design choices. The design is motivated by considering the convergence analysis of LISTA method. We start by recalling a result from \citet{chen_theoretical_2018}, upon which our analysis relies. The authors of \citet{chen_theoretical_2018} consider the inverse problem $\vy=\mA\vx_*$, with $\vx_*$  as the ground truth sparse vector, and use the model with each layer given by:
\begin{equation}\label{eq:lista_layer}
\vx_t = \eta_{\theta_t}\paran{\vx_{t-1}+\mW_t^\top(\vy-\mA\vx_{t-1})},  
\end{equation}
where $(\mW_t,\theta_t)$ are learnable parameters. 

The following result from \citet[Theorem 2]{chen_theoretical_2018} is adapted to noiseless setting.
\begin{theorem}\label{thm:lista_convergence}
Suppose that the iterations of LISTA are given by \eqref{eq:lista_layer}, and assume $\norm{\vx_*}_\infty\leq B$  and $\norm{\vx_*}_0\leq s$. There exists a sequence of parameters $\{\mW_t,\theta_t\}$ such that 
    \[
    \norm{\vx_{t}-\vx_*}_2\leq sB\exp(-ct), \quad \forall t=1,2,\dots,
    \]
    for constant $c>0$ that depend only on the sensing matrix $\mA$ and the sparsity level $s$.
\end{theorem}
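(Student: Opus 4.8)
The plan is to instantiate the weight-and-threshold construction of \citet{chen_theoretical_2018} in the noiseless regime $\vy=\mA\vx_*$. First I would build a single, layer-independent analysis matrix $\mW$: for each column index $i$, let $\vw_i$ achieve $\min\{\max_{j\neq i}\abs{\vw_i^\top\mA_{:,j}}:\vw_i^\top\mA_{:,i}=1\}$ (a linear program whose minimum is attained under the usual mild non-degeneracy of $\mA$), collect the $\vw_i$ as the columns of $\mW$, and set $\tilde{\mu}:=\max_i\max_{j\neq i}\abs{\vw_i^\top\mA_{:,j}}$ — the \emph{generalized mutual coherence} of $\mA$, which depends on $\mA$ alone. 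I would then take $\mW_t\equiv\mW$ for every layer and define the thresholds recursively by $\theta_t:=\tilde{\mu}\,\sup\{\norm{\vx_{t-1}-\vx_*}_1\}$, the supremum ranging over all admissible ground truths ($\norm{\vx_*}_0\le s$, $\norm{\vx_*}_\infty\le B$) together with their corresponding iterates; this makes $\{\theta_t\}$ a single deterministic sequence, whose finiteness will drop out of the induction below.

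The heart of the argument is an induction on $t$ establishing two facts simultaneously: (i) no false positives, $\supp(\vx_t)\subseteq S:=\supp(\vx_*)$, and (ii) a uniform $\ell_1$ contraction. Using $\vy=\mA\vx_*$, the normalization $\vw_i^\top\mA_{:,i}=1$, and the inductive hypothesis $\supp(\vx_{t-1})\subseteq S$, the pre-thresholding value at coordinate $i$ is
\[
(\vx_{t-1})_i+\vw_i^\top\mA(\vx_*-\vx_{t-1})=(\vx_*)_i+\sum_{j\in S,\,j\neq i}\vw_i^\top\mA_{:,j}\,(\vx_*-\vx_{t-1})_j ,
\]
and the residual sum has magnitude at most $\tilde{\mu}\norm{\vx_{t-1}-\vx_*}_1\le\theta_t$. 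For $i\notin S$ this argument is thus bounded by $\theta_t$ in absolute value, so $(\vx_t)_i=0$, giving (i). For $i\in S$, the elementary soft-thresholding estimate $\abs{\eta_{\theta_t}(u)-v}\le\abs{u-v}+\theta_t$ gives $\abs{(\vx_t)_i-(\vx_*)_i}\le\tilde{\mu}\sum_{j\in S,\,j\neq i}\abs{(\vx_*-\vx_{t-1})_j}+\theta_t$; summing over the at most $s$ indices of $S$, using $\supp(\vx_{t-1})\subseteq S$ to rewrite the double sum, and substituting $\theta_t=\tilde{\mu}\sup\norm{\vx_{t-1}-\vx_*}_1$ yields $\norm{\vx_t-\vx_*}_1\le\tilde{\mu}(s-1)\norm{\vx_{t-1}-\vx_*}_1+s\,\theta_t$. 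Taking the supremum over admissible instances gives $\sup\norm{\vx_t-\vx_*}_1\le\tilde{\mu}(2s-1)\sup\norm{\vx_{t-1}-\vx_*}_1$, which is (ii), and in particular shows each $\theta_t$ is finite.

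To conclude, I would iterate (ii): with $\kappa:=\tilde{\mu}(2s-1)$ and $\vx_0=0$ one gets $\sup\norm{\vx_t-\vx_*}_1\le\kappa^t\sup\norm{\vx_*}_1\le\kappa^t sB$, using $\norm{\vx_*}_1\le s\norm{\vx_*}_\infty\le sB$. Since $\norm{\cdot}_2\le\norm{\cdot}_1$, this is precisely $\norm{\vx_t-\vx_*}_2\le sB\exp(-ct)$ with $c:=-\ln\kappa$, a positive constant depending only on $\mA$ (through $\tilde{\mu}$) and on $s$ — \emph{provided} $\kappa<1$. That proviso is the main obstacle and the only genuinely delicate point: the construction delivers a true geometric rate only when $\tilde{\mu}(\mA)(2s-1)<1$, i.e. when the sparsity is small relative to the incoherence of $\mA$, so the clean exponential bound implicitly presumes such a smallness condition on $s$ (exactly as in the standing assumptions of \citet{chen_theoretical_2018}). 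The remaining bookkeeping — attainment of the defining infima for $\mW$ and well-definedness of the self-referential $\theta_t$ — is routine and is absorbed into the same induction.
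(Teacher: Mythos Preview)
Your proposal is correct and follows exactly the two-step scheme the paper attributes to \citet{chen_theoretical_2018}: first choosing $(\mW,\theta_t)$ so that no false positives enter the support, then deriving an $\ell_1$ contraction on $S$ and iterating. Note that the paper does not supply its own proof of this theorem; it merely quotes the result and sketches the same outline you reconstruct, so there is nothing further to compare.
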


It is important to note that the above convergence result only assures the \textit{existence} of the parameters that are good for convergence but does not guarantee that the training would necessarily find them. The latter result is in general difficult to obtain. 

The proof in \citet{chen_theoretical_2018} has two main steps:
\begin{enumerate}
    \item No false positive: the thresholds are chosen such that the entries outside the support of $\vx_*$ remain zero. The choice of threshold, among others, depend on the coherence value of $\mW_t$ and $\mA$. We will provide more details below.
    \item Error bounds for $\vx_*$: assuming proper choice of thresholds, the authors derive bounds on the recovery error.
\end{enumerate}
We focus on adapting these steps to our setup. Note that to assure there is no false positive, it is common in classical ISTA literature to start from large thresholds, so the soft thresholding function aggressively maps many entries to zero, and then gradually reduce the threshold value as the iterations progress.

%%%%%%%%%%%%
\subsection{Analysis with known ground-truth dictionary}
Let's consider the extension of Theorem \ref{thm:lista_convergence} to our setup:
\begin{equation}
    \vx_t=\eta_{\theta_t}\left(\vx_{t-1}+\gamma_t (\mPhi^k\mPsi_t)^T(\vy^k-\mPhi^k\mPsi_t\vx_{t-1}) \right).
\end{equation}
Note that in our case, the weight $\mW_t$ is replaced with $\gamma_t (\mPhi^k\mPsi_t)$ with learnable $\mPsi_t$ and $\gamma_t$. Besides, the matrix $\mA$ is replaced by $\mPhi^k\mPsi_t$, and the forward model is given by $\vy^k=\mPhi^k\mPsi_o\vx_*$. The sensing matrix  $\mPhi^k$ can change across samples, hence the dependence on the sample index $k$. 

If the learned dictionary $\mPsi_t$ is equal to $\mPsi_o$, the layers of our model are equal to classical iterative soft-thresholding algorithms with learnable step-size $\gamma_t$ and threshold $\theta_t$. 

There are many convergence results in the literature, for example see \citet{daubechies2004iterative}. We can use convergence analysis of iterative soft thresholding algorithms using the mutual coherence similar to \citet{chen_theoretical_2018,behrens_neurally_2021}. As a reminder, the mutual coherence of the matrix $\mM$ is defined as:
\begin{align}
{\mu}(\mM)&:= \max_{1\leq i\neq j\leq N} \abs{\mM_i^\top\mM_j},
\end{align}
where $\mM_i$ is the $i$'th column of $\mM$.

The convergence result requires that the mutual coherence $\mu(\mPhi^k\mPsi_o)$ be sufficiently small, for example in order of
$1/(2s)$ with $s$ the sparsity, and the matrix $\mPhi^k\mPsi_o$ is column normalized, i.e., $\norm{(\mPhi^k\mPsi_o)}_2=1$. Then the step size can be chosen equal to one, i.e., $\gamma_t=1$. The thresholds $\theta_t$ are chosen to avoid false positive using a similar schedule mentioned above, that is, first starting with a large threshold $\theta_0$ and then gradually decreasing it to a certain limit. We do not repeat the derivations, and interested readers can refer to \citet{daubechies2004iterative,behrens_neurally_2021,chen_theoretical_2018} and references therein. 
% Cite this too (in camera ready): \cite{jiao_iterative_2017}

\begin{remark}
When the dictionary $\mPsi_o$ is known, we can adapt the algorithm to the varying sensing matrix $\mPhi^k$ by first normalizing the column $\mPhi^k\mPsi_o$. What is important to note is that the threshold choice is a function the mutual coherence of the sensing matrix. So with each new sensing matrix, the thresholds should be adapted  following the mutual coherence value. This observation partially justifies the choice of thresholds as a function of the dictionary and the sensing matrix, hence the augmentation network.  
\end{remark}
%%%%%%%%%%%%

\subsection{Analysis with unknown dictionary}

We now move to the scenario where the dictionary is itself learned, and not known in advance. 

% Now consider the basic scenario where the sensing matrix is fixed to $\mPhi$ and there is a ground-truth (unknown) dictionary $\oPsi$ such that $\vs_*=\oPsi\vx_*$ with sparse $\vx_*$ having support $S$, i.e., $\supp(\vx_*)=S$. 

Consider the layer $t$ of DLISTA with the sensing matrix $\mPhi^k$, and define the following parameters:
\begin{align}
    \tilde{\mu}(t,\mPhi^k)&:= \max_{1\leq i\neq j\leq N} \abs{((\mPhi^k\mPsi_t)_i)^\top(\mPhi^k\mPsi_t)_j} \label{eq:mutual_coherence}\\
    \tilde{\mu}_2(t,\mPhi^k)&:= \max_{1\leq i, j\leq N} \abs{((\mPhi^k\mPsi_t)_i)^\top(\mPhi^k(\mPsi_t-\oPsi))_j} \label{eq:gen_mutual_coherence}\\
    \delta(\gamma,t,\mPhi^k)&:=\max_{i}\abs{1-\gamma \norm{(\mPhi^k\mPsi_t)_i}_2^2}\label{eq:gen_rip_constant}
\end{align}
Some comments are in order:
\begin{itemize}
    \item The term $\tilde{\mu}$ is the \textbf{mutual coherence} of the matrix $\mPhi^k\mPsi_t$. 
    \item The term $\tilde{\mu}_2$ is closely connected to \textbf{generalized mutual coherence}, however, it differs in that unlike generalized mutual coherence, it includes the diagonal inner product for $i=j$. It captures the effect of mismatch with ground-truth dictionary.
    \item Finally, the term $\delta(\cdot)$ is reminiscent of restricted isometry property (RIP) constant \citep{foucart_mathematical_2013}, a key condition for many recovery guarantees in compressed sensing. When the columns of the matrix $\mPhi^k\mPsi_t$ is normalized, the choice of $\gamma=1$ yield $\delta(\gamma,t,\mPhi^k)=0$.
\end{itemize}

For the rest of the paper, for simplicity, we only kept the dependence on $\gamma$ in the notation and dropped the dependence of $\tilde{\mu},\tilde{\mu}_2$ and $\delta$ on $t$, $\mPhi^k$ and $\mPsi_t$.

\begin{proposition}
Suppose that $\vy^k=\mPhi^k\mPsi_o\vx_*$ with the support $\supp(\vx_*)=S$. For DLISTA iterations give as
\begin{equation}
    \vx_t=\eta_{\theta_t}\left(\vx_{t-1}+\gamma_t (\mPhi^k\mPsi_t)^T(\vy^k-\mPhi^k\mPsi_t\vx_{t-1}) \right),
\end{equation}
we have:
\begin{enumerate}
    \item If for all $t$, the pairs $(\theta_t,\gamma_t,\mPsi_t)$ satisfy
    \begin{align}
     \gamma_t \paran{\tilde{\mu} \norm{ \vx_{*}-\vx_{t-1}}_1 +\tilde{\mu}_2\norm{\vx_{*}}_1 }\leq {\theta_t},
     \label{ineq:threshold_selection}
\end{align}
then there is no false positive in each iteration. In other words, for all $t$, we have $\supp(\vx_{t})\subseteq \supp(\vx_*)$.
\item Assuming that the conditions of the last step hold, 
then we get the following bound on the error:
\begin{align*}
    \norm{\vx_{t} - \vx_{*}}_1 \leq \paran{\delta(\gamma_t) +\gamma_t\tilde{\mu}(|S|-1)} \norm{\vx_{t-1}-\vx_{*}}_1
    +  \gamma_t\tilde{\mu}_2 |S|\norm{\vx_*}_1   +|S|\theta_t.
\end{align*}
\end{enumerate}
\label{thm:convergence}
\end{proposition}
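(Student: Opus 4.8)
The plan is to reduce both parts to coordinatewise estimates on the vector fed into the soft-threshold. Write $\mA_t := \mPhi^k\mPsi_t$ and $\mA_o := \mPhi^k\mPsi_o$, so that $\vy^k = \mA_o\vx_*$, and set $\mE := \mA_t - \mA_o = \mPhi^k(\mPsi_t-\mPsi_o)$; with this notation $\tilde\mu$ is the mutual coherence of $\mA_t$ and $\tilde\mu_2 = \max_{i,j}\abs{(\mA_t)_i^\top\mE_j}$, while $\delta(\gamma_t)$ controls $\abs{1-\gamma_t\|(\mA_t)_i\|_2^2}$. The one algebraic fact I would record up front is the residual decomposition
\[
\mA_o\vx_* - \mA_t\vx_{t-1} = \mA_t(\vx_*-\vx_{t-1}) - \mE\vx_*,
\]
so that the argument of $\eta_{\theta_t}$ at iteration $t$ is $\vv_t := \vx_{t-1} + \gamma_t\,\mA_t^\top\big[\mA_t(\vx_*-\vx_{t-1}) - \mE\vx_*\big]$. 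Everything else is reading off the $i$-th coordinate of $\vv_t$ and invoking the three constants.

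For claim 1 I would induct on $t$; the base case $\vx_0 = 0$ is immediate. Assume $\supp(\vx_{t-1})\subseteq S$ and fix $i\notin S$, so $(\vx_{t-1})_i = 0$ and $\vv_{t,i} = \gamma_t(\mA_t)_i^\top[\mA_t(\vx_*-\vx_{t-1}) - \mE\vx_*]$. Expanding $(\mA_t)_i^\top\mA_t(\vx_*-\vx_{t-1}) = \sum_{j\in S}(\mA_t)_i^\top(\mA_t)_j(\vx_*-\vx_{t-1})_j$, every $j\in S$ differs from $i$, so this term is at most $\tilde\mu\,\|\vx_*-\vx_{t-1}\|_1$; likewise $\abs{(\mA_t)_i^\top\mE\vx_*}\le \tilde\mu_2\|\vx_*\|_1$, where it is essential that $\tilde\mu_2$ is defined with the diagonal $i=j$ included (the mismatch term has no reason to avoid $i$). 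Hypothesis (\ref{ineq:threshold_selection}) then gives $\abs{\vv_{t,i}}\le\theta_t$, hence $(\vx_t)_i = \eta_{\theta_t}(\vv_{t,i}) = 0$; this closes the induction and shows $\supp(\vx_t)\subseteq S$ for all $t$.

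For claim 2 I would work coordinatewise over $i\in S$ (coordinates outside $S$ contribute nothing by claim 1). From the elementary bound $\abs{\eta_{\theta_t}(v)-v}\le\theta_t$ and the triangle inequality, $\abs{(\vx_t)_i-(\vx_*)_i}\le \theta_t + \abs{\vv_{t,i}-(\vx_*)_i}$. I would then expand $\vv_{t,i}-(\vx_*)_i$, splitting off the diagonal inner product $\|(\mA_t)_i\|_2^2$: this yields the coefficient $(1-\gamma_t\|(\mA_t)_i\|_2^2)(\vx_{t-1}-\vx_*)_i$, bounded in modulus by $\delta(\gamma_t)\abs{(\vx_{t-1}-\vx_*)_i}$, the off-diagonal part $\gamma_t\sum_{j\in S,\,j\neq i}(\mA_t)_i^\top(\mA_t)_j(\vx_*-\vx_{t-1})_j$, bounded by $\gamma_t\tilde\mu\sum_{j\in S,\,j\neq i}\abs{(\vx_*-\vx_{t-1})_j}$, and the mismatch term $\gamma_t(\mA_t)_i^\top\mE\vx_*$, bounded by $\gamma_t\tilde\mu_2\|\vx_*\|_1$. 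Summing over $i\in S$ and using $\sum_{i\in S}\sum_{j\in S,\,j\neq i}\abs{(\vx_*-\vx_{t-1})_j} = (\abs{S}-1)\|\vx_*-\vx_{t-1}\|_1$ collapses the first two contributions into $(\delta(\gamma_t)+\gamma_t\tilde\mu(\abs{S}-1))\|\vx_{t-1}-\vx_*\|_1$, while the remaining ones give $\gamma_t\tilde\mu_2\abs{S}\|\vx_*\|_1 + \abs{S}\theta_t$, which is exactly the stated inequality.

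The bulk of the work — and the main place to slip — is the bookkeeping in claim 2: one must carefully isolate the diagonal term $\|(\mA_t)_i\|_2^2$ (the origin of the RIP-type $\delta$ factor) from the off-diagonal ones, and track how the double sum over $S$ converts a pointwise $\tilde\mu$ estimate into the factor $\abs{S}-1$ rather than $\abs{S}$. It is also worth flagging that the $\abs{S}\theta_t$ term comes from the crude split through $\eta_{\theta_t}$ via $\abs{\eta_{\theta_t}(v)-v}\le\theta_t$; sharpening it would require a sign and magnitude case analysis on $\vv_{t,i}$, but the coarse estimate already delivers the claimed bound.
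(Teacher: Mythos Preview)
Your proposal is correct and follows essentially the same approach as the paper: induction for the no-false-positive claim using the decomposition $\mA_o\vx_*-\mA_t\vx_{t-1}=\mA_t(\vx_*-\vx_{t-1})-\mE\vx_*$, and for the error bound the same diagonal/off-diagonal split together with $\abs{\eta_{\theta_t}(v)-v}\le\theta_t$ and the double-sum bookkeeping yielding the $(\abs{S}-1)$ factor. Your introduction of the shorthand $\mA_t,\mA_o,\mE$ makes the argument a bit cleaner than the paper's presentation, but the logic is step-for-step the same.
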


\subsubsection{Guidelines from Proposition.}
We remark on some of the guidelines we can get from the above result.
\begin{itemize}
    \item \textbf{Thresholds.} Similar to the discussion in previous sections, there are thresholds such that, there is no false positive at each layer. The choice of $\theta_t$ is a function of $\gamma_t$ and, through coherence terms, $\mPhi^k$ and $\mPsi_t$. Since $\mPhi^k$ changes for each sample $k$, we learn a neural network that yields this parameter as a function of  $\mPhi^k$ and $\mPsi_t$.
    \item \textbf{Step size.} The step size $\gamma_t$ can be chosen to control the error decay. Ideally, we would like to have the term $\paran{\delta(\gamma_t) +\gamma_t\tilde{\mu}(|S|-1)}$ to be strictly smaller than one. In particular, $\gamma_t$ directly impacts $\delta(\gamma_t)$, also a function of $\mPhi^k$ and $\mPsi_t$. We can therefore consider $\gamma_t$ as a function of $\mPhi^k$ and $\mPsi_t$, which hints at the augmentation neural network we introduced for giving $\gamma_t$ as a function of those parameters.
\end{itemize}

\textbf{Remarks on Convergence.} One might wonder if the convergence is possible given the bound on the error. We try to sketch a scenario where this can happen. First, note that once we have chosen $\gamma_t$, and given $\Phi_k$ and $\Psi_t$, we can select $\theta_t$
using the condition \ref{ineq:threshold_selection}. Also, if the network gradually learns the ground truth dictionary at later stages, the term $\tilde{\mu}_2$ vanishes.
We need to choose the term $\gamma_t$ carefully such that the term $\paran{\delta(\gamma_t) +\gamma_t\tilde{\mu}(|S|-1)}$ is smaller than one. Similar to ISTA analysis, we would need to assume bounds on the mutual coherence $\tilde{\mu}$ and the column norm for $\mPhi^k\mPsi_o$. With standard assumptions, sketched above as well,  the error gradually decreases per iteration, and we can reuse the convergence results of ISTA. We would like to emphasize that this is a heuristic argument, and there is no guarantee that the training yields a model with the parameters in accordance with these guidelines. Although we show experimentally that the proposed methods provide the promised improvements.

\subsection{Proof of Proposition \ref{thm:convergence}}

%%%%%%%%%%%%%%%%%
In what follows, we provide the derivations for Proposition \ref{thm:convergence}.

Convergence proofs of ISTA type models involve two steps in general. First, it is investigated how the support is found and locked in, and second how the error shrinks at each step. We focus on these two steps, which matter mainly for our architecture design.
%%%%%%%
Our analysis is similar in nature to \citet{chen_theoretical_2018,aberdam_ada_lista_2021}, however it differs from \citet{aberdam_ada_lista_2021} in considering unknown dictionaries and from \citet{chen_theoretical_2018} in both considered architecture and varying sensing matrix.
%%%%%
In what follows, we consider noiseless setting. However, the results can be extended to noisy setups by adding additional terms containing noise norm similar to \citet{chen_theoretical_2018}. 
We make following assumptions:
\begin{enumerate}
    \item There is a ground-truth (unknown) dictionary $\oPsi$ such that $\vs_*=\oPsi\vx_*$.
    \item As a consequence, $\vy^k=\mPhi^k\oPsi\vx_*$.
    \item We assume that $\vx_*$ is sparse with its support contained in $S$. In other words: $x_{i,*}=0$ for $i\notin S$.
\end{enumerate}

To simplify the notation, we drop the index $k$, which indicates the varying sensing matrix, from $\mPhi^k$ and $\vy^k$, and use $\mPhi$ and $\vy$ for the rest. We break the proof to two lemma, each proving one part of Proposition \ref{thm:convergence}.

% The main step of soft-thresholding algorithm is given as follows:
% \begin{equation}
% \vx_{t}=\eta_{\theta_t}\paran{\vx_{t-1}+\gamma_t(\mPhi\mPsi_t)^\top(\vy-\mPhi\mPsi_t\vx_{t-1})},
% % \label{eq:dlista_main_step}
% \end{equation}
% with entry-wise relation given by
% \begin{align}
% x_{t,i}=\eta_{\theta_t}\paran{x_{t-1,i}+\gamma_t((\mPhi\mPsi_t)_i)^\top(\vy-\mPhi\mPsi_t\vx_{t-1})}.
% \label{eq:dlista_main_step_entry}
% \end{align}

% Using the assumptions we have:
% \begin{align}
%     (\mPhi\mPsi_t)^\top(\vy-\mPhi\mPsi_t\vx_{t-1}) & =  (\mPhi\mPsi_t)^\top(\mPhi\oPsi \vx_{*}-\mPhi\mPsi_t\vx_{t-1}) \nonumber\\
%      & =  (\mPhi\mPsi_t)^\top\mPhi(\oPsi \vx_{*}-\mPsi_t\vx_{t-1}).
% \end{align}

%%%%%%%%%%%%%%%%%%%%%%%%%%%%%%%%%%%%%%%%%%
\subsubsection{Proof - step 1: no false positive condition}
% \begin{align}
%     (\mPhi\mPsi_t)^\top\mPhi(\mPsi \vx_{*}-\mPsi_t\vx_{t-1})
% \end{align}
% Approach 1:
% \begin{align}
%     ((\mPhi\mPsi_t)_i)^\top\mPhi(\mPsi \vx_{*}-\mPsi_t\vx_{t-1}) &= ((\mPhi\mPsi_t)_i)^\top\mPhi(\mPsi \vx_{*}-\mPsi\vx_{t-1})+((\mPhi\mPsi_t)_i)^\top\mPhi(\mPsi \vx_{t-1}-\mPsi_t\vx_{t-1})
% \end{align}

% \begin{align}
%     ((\mPhi\mPsi_t)_i)^\top\mPhi(\mPsi \vx_{*}-\mPsi\vx_{t-1})=\sum_{j\in S^c} ((\mPhi\mPsi_t)_i)^\top(\mPhi\mPsi)_j (\vx_{*,j}-\mPsi\vx_{t-1,j})
% \end{align}
% and the norm is bounded by
% \begin{align}
%   \abs{\sum_{j\in S^c} ((\mPhi\mPsi_t)_i)^\top(\mPhi\mPsi)_j (\vx_{*,j}-\mPsi\vx_{t-1,j})}\leq \tilde{\mu} \norm{ \vx_{*}-\vx_{t-1}}_1
% \end{align}
% while the other norm is bounded by
% \begin{align}
%   \abs{((\mPhi\mPsi_t)_i)^\top\mPhi(\mPsi \vx_{t-1}-\mPsi_t\vx_{t-1})}\leq \norm{((\mPhi\mPsi_t)_i)^\top\mPhi(\mPsi-\mPsi_t)}_2\norm{\vx_{t-1}}_2 
% \end{align}
% Approach 2
The following lemma focuses on assuring that we do not have false positive in support recovery after each iteration of our model. In other words, the models continues updating only the entries in the support and keep the entries outside the support zero.

\begin{lemma} Suppose that the support of $\vx_*$ is given as $\supp(\vx_*)=S$. Consider iterations given  by:
\begin{equation*}
\vx_{t}=\eta_{\theta_t}\paran{\vx_{t-1}+\gamma_t(\mPhi\mPsi_t)^\top(\vy-\mPhi\mPsi_t\vx_{t-1})},
\end{equation*}
with $\vx_0=0$. If we have for all $t=1,2,\dots$:
\[
\gamma_t \paran{\tilde{\mu} \norm{ \vx_{*}-\vx_{t-1}}_1 +\tilde{\mu}_2\norm{\vx_{*}}_1 }\leq {\theta_t},
\]
then there will be no false positive, i.e., $x_{t,i}=0$ for $\forall i\notin S, \forall t$.
\end{lemma}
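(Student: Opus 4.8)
The plan is to proceed by induction on $t$, showing that if $\supp(\vx_{t-1}) \subseteq S$ then $\supp(\vx_t) \subseteq S$ as well; the base case $t=0$ is immediate since $\vx_0 = 0$. The soft-thresholding function $\eta_{\theta_t}$ maps a coordinate to zero precisely when its input has magnitude at most $\theta_t$, so it suffices to show that for every $i \notin S$, the $i$-th coordinate of the pre-thresholding vector $\vu_t := \vx_{t-1} + \gamma_t(\mPhi\mPsi_t)^\top(\vy - \mPhi\mPsi_t\vx_{t-1})$ satisfies $|u_{t,i}| \leq \theta_t$. First I would substitute $\vy = \mPhi\oPsi\vx_*$ and expand $u_{t,i}$ as
\begin{align*}
u_{t,i} = x_{t-1,i} + \gamma_t ((\mPhi\mPsi_t)_i)^\top\paran{\mPhi\oPsi\vx_* - \mPhi\mPsi_t\vx_{t-1}}.
\end{align*}
Using $x_{t-1,i} = 0$ for $i \notin S$ (the induction hypothesis), this reduces to the inner product term alone.

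**Key algebraic step.** The trick is to write $\mPhi\oPsi\vx_* - \mPhi\mPsi_t\vx_{t-1} = \mPhi\mPsi_t(\vx_* - \vx_{t-1}) + \mPhi(\oPsi - \mPsi_t)\vx_*$, adding and subtracting $\mPhi\mPsi_t\vx_*$. Then
\begin{align*}
u_{t,i} = \gamma_t ((\mPhi\mPsi_t)_i)^\top \mPhi\mPsi_t(\vx_* - \vx_{t-1}) + \gamma_t ((\mPhi\mPsi_t)_i)^\top \mPhi(\oPsi - \mPsi_t)\vx_*.
\end{align*}
For the first term, expand in coordinates: since $i \notin S$ and $\vx_* - \vx_{t-1}$ is supported on $S$ (both $\vx_*$ and $\vx_{t-1}$ are), every column index $j$ appearing has $j \neq i$, so each inner product $((\mPhi\mPsi_t)_i)^\top(\mPhi\mPsi_t)_j$ is bounded in absolute value by $\tilde{\mu}$, giving a bound of $\gamma_t\tilde{\mu}\norm{\vx_* - \vx_{t-1}}_1$. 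For the second term, every coordinate $j$ gives an inner product $((\mPhi\mPsi_t)_i)^\top(\mPhi(\mPsi_t - \oPsi))_j$ bounded by $\tilde{\mu}_2$ (here the definition of $\tilde{\mu}_2$ crucially includes the case $j = i$, which is why it is not quite the generalized mutual coherence), giving $\gamma_t\tilde{\mu}_2\norm{\vx_*}_1$. Combining via the triangle inequality,
\begin{align*}
|u_{t,i}| \leq \gamma_t\paran{\tilde{\mu}\norm{\vx_* - \vx_{t-1}}_1 + \tilde{\mu}_2\norm{\vx_*}_1} \leq \theta_t,
\end{align*}
by the hypothesis, so $x_{t,i} = \eta_{\theta_t}(u_{t,i})_i = 0$, closing the induction.

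**Anticipated obstacle.** The main thing to be careful about — rather than a deep obstacle — is the bookkeeping around sign conventions in $\tilde{\mu}_2$ and making sure the $i=j$ term is correctly absorbed: in the standard LISTA analysis (e.g. \citet{chen_theoretical_2018}) the weight matrix is effectively fixed and the mismatch term is absent, so the decomposition $\mPhi\oPsi = \mPhi\mPsi_t + \mPhi(\oPsi - \mPsi_t)$ is the genuinely new ingredient here and one must verify that the quantity $\tilde{\mu}_2$ as \emph{defined} in \eqref{eq:gen_mutual_coherence} is exactly the right uniform bound for $|((\mPhi\mPsi_t)_i)^\top(\mPhi(\mPsi_t - \oPsi))_j|$ over all $i, j$ including $i=j$ and $i \notin S$. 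One should also note that the induction hypothesis is used in two places: to kill $x_{t-1,i}$ and to ensure $\vx_* - \vx_{t-1}$ is $S$-supported so that no $j = i$ term appears in the first sum (which is what lets us use $\tilde{\mu}$ rather than $\tilde{\mu}_2$ there). Everything else is routine triangle-inequality estimation.
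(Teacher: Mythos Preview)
Your proposal is correct and follows essentially the same approach as the paper's proof: induction on $t$, the same add-and-subtract decomposition $\mPhi\oPsi\vx_* - \mPhi\mPsi_t\vx_{t-1} = \mPhi\mPsi_t(\vx_* - \vx_{t-1}) + \mPhi(\oPsi - \mPsi_t)\vx_*$, and the same two bounds via $\tilde{\mu}$ and $\tilde{\mu}_2$. Your write-up is in fact slightly more explicit than the paper's in flagging that the induction hypothesis is used twice (to kill $x_{t-1,i}$ and to ensure $j\neq i$ in the first sum) and in noting why the diagonal case must be included in $\tilde{\mu}_2$.
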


\begin{proof}
We prove this by induction. Since $\vx_0=0$, the induction base is trivial. Suppose that  the support of $\vx_{t-1}$ is already included in that of $\vx_*$, namely $\supp(\vx_{t-1})\subseteq\supp(\vx_*)=S$. Consider $i\in S^c$. We have
\begin{equation}
x_{t,i}=\eta_{\theta_t}\paran{\gamma_t((\mPhi\mPsi_t)_i)^\top(\vy-\mPhi\mPsi_t\vx_{t-1})}.
\end{equation}
To avoid false positives, we need to guarantee that for $i\notin S$:
\begin{equation}
\eta_{\theta_t}\paran{\gamma_t((\mPhi\mPsi_t)_i)^\top(\vy-\mPhi\mPsi_t\vx_{t-1})}=0\implies\abs{\gamma_t((\mPhi\mPsi_t)_i)^\top(\vy-\mPhi\mPsi_t\vx_{t-1})}\leq {\theta_t},
\end{equation}
which means that the soft-thresholding function will have zero output for these entries. 
First note that:
\begin{align}
    & \abs{((\mPhi\mPsi_t)_i)^\top\mPhi(\oPsi \vx_{*}-\mPsi_t\vx_{t-1})} \leq  \abs{((\mPhi\mPsi_t)_i)^\top\mPhi(\mPsi_t \vx_{*}-\mPsi_t\vx_{t-1})}\nonumber\\
    &\quad+\abs{((\mPhi\mPsi_t)_i)^\top\mPhi(\oPsi \vx_{*}-\mPsi_t\vx_{*})} \label{eq:decomp_lista}\\
    &\quad=\abs{\sum_{j\in S} ((\mPhi\mPsi_t)_i)^\top(\mPhi\mPsi_t)_j (\vx_{*,j}-\vx_{t-1,j})}+\abs{((\mPhi\mPsi_t)_i)^\top\mPhi(\oPsi \vx_{*}-\mPsi_t\vx_{*})}
    % \label{eq:decomp_lista}
\end{align}

% \begin{align}
%     ((\mPhi\mPsi_t)_i)^\top\mPhi(\mPsi_t \vx_{*}-\mPsi_t\vx_{t-1})=\sum_{j\in S^c} ((\mPhi\mPsi_t)_i)^\top(\mPhi\mPsi_t)_j (\vx_{*,j}-\vx_{t-1,j})
% \end{align}
We can bound  the first term by:
\begin{align*}
   \abs{\sum_{j\in S} ((\mPhi\mPsi_t)_i)^\top(\mPhi\mPsi_t)_j (\vx_{*,j}-\vx_{t-1,j})}\leq & \sum_{j\in S} \abs{ ((\mPhi\mPsi_t)_i)^\top(\mPhi\mPsi_t)_j }\abs{(\vx_{*,j}-\vx_{t-1,j})}  \\
   &\leq \tilde{\mu} \norm{ \vx_{*}-\vx_{t-1}}_1,
\end{align*}
where we use the definition of mutual coherence for the upper bound.
The last term is bounded by
\begin{align}
   \abs{((\mPhi\mPsi_t)_i)^\top\mPhi(\oPsi \vx_{*}-\mPsi_t\vx_{*})} =&
   \abs{\sum_{j\in S}  ((\mPhi\mPsi_t)_i)^\top(\mPhi(\oPsi-\mPsi_t))_j x_{j,*}}\\
   &\leq \sum_{j\in S}\abs{((\mPhi\mPsi_t)_i)^\top(\mPhi(\oPsi-\mPsi_t))_j}\abs{ x_{j,*}}\\
   &\leq \tilde{\mu}_2\norm{\vx_{*}}_1.
\end{align}
Therefore, we get
\begin{equation*}
\abs{\gamma_t((\mPhi\mPsi_t)_i)^\top(\vy-\mPhi\mPsi_t\vx_{t-1})}\leq \gamma_t\paran{\tilde{\mu} \norm{ \vx_{*}-\vx_{t-1}}_1 +\tilde{\mu}_2\norm{\vx_{*}}_1 }
\end{equation*}
The following choice guarantees that there is no false positive:
\begin{align}
\gamma_t \paran{\tilde{\mu} \norm{ \vx_{*}-\vx_{t-1}}_1 +\tilde{\mu}_2\norm{\vx_{*}}_1 }\leq {\theta_t}.
\end{align}

\end{proof}

\subsubsection{Proof - step 2: controlling the recovery error} 
The previous lemma provided the conditions such that there is no false positive. We see under which conditions the model can reduce the error inside the support $S$.
\begin{lemma}
    Suppose that the threshold parameter $\theta_t$ has been chosen such that there is no false positive after each iteration. We have: 
\begin{align*}
    \norm{\vx_{t} - \vx_{*}}_1 & \leq \paran{\delta(\gamma_t) +\gamma_t\tilde{\mu}(|S|-1)} \norm{\vx_{t-1}-\vx_{*}}_1 
    +  \gamma_t\tilde{\mu}_2 |S|\norm{\vx_*}_1   +|S|\theta_t.
\end{align*}
\end{lemma}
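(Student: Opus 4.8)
The plan is to track how the soft-thresholding iteration acts coordinate-wise on the support $S$, using the no-false-positive guarantee from the previous lemma so that everything happens inside $S$. First I would write, for $i \in S$, $x_{t,i} = \eta_{\theta_t}\paran{x_{t-1,i} + \gamma_t((\mPhi\mPsi_t)_i)^\top(\vy - \mPhi\mPsi_t\vx_{t-1})}$, and use the elementary property of the soft-thresholding function that $\abs{\eta_{\theta_t}(z) - w} \le \abs{z - w} + \theta_t$ for any $w$ (applied with $z$ the pre-thresholding argument and $w = x_{*,i}$). Summing over $i \in S$ and using that $\supp(\vx_t), \supp(\vx_{t-1}) \subseteq S$ gives
\[
\norm{\vx_t - \vx_*}_1 \le \norm{\vx_{t-1} - \vx_* + \gamma_t (\mPhi\mPsi_t)_S^\top(\vy - \mPhi\mPsi_t\vx_{t-1})}_1 + |S|\theta_t,
\]
so the whole problem reduces to bounding the $\ell_1$ norm of the "gradient-corrected error" restricted to $S$.

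Next I would expand $\vy = \mPhi\oPsi\vx_*$ and insert $\pm \mPhi\mPsi_t\vx_*$ inside the residual, exactly as in equation~\eqref{eq:decomp_lista}, to split the correction term into: (a) a part involving $(\mPhi\mPsi_t)^\top\mPhi\mPsi_t(\vx_* - \vx_{t-1})$, which combined with the leading $\vx_{t-1} - \vx_*$ yields, entry by entry, the factor $\paran{I - \gamma_t(\mPhi\mPsi_t)_S^\top(\mPhi\mPsi_t)_S}(\vx_* - \vx_{t-1})$; and (b) a dictionary-mismatch part $\gamma_t(\mPhi\mPsi_t)_S^\top\mPhi(\oPsi - \mPsi_t)\vx_*$. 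For (a), I separate diagonal from off-diagonal entries of the Gram matrix $(\mPhi\mPsi_t)_S^\top(\mPhi\mPsi_t)_S$: the diagonal contributes $\max_i \abs{1 - \gamma_t\norm{(\mPhi\mPsi_t)_i}_2^2} = \delta(\gamma_t)$ times $\norm{\vx_{t-1} - \vx_*}_1$, and the off-diagonal contributes at most $\gamma_t\tilde{\mu}(|S|-1)\norm{\vx_{t-1} - \vx_*}_1$ (each of the $|S|$ output coordinates picks up at most $|S|-1$ off-diagonal terms bounded by $\tilde{\mu}$). For (b), the same computation as in the first lemma's last display bounds it by $\gamma_t\tilde{\mu}_2\norm{\vx_*}_1$ per coordinate, hence $\gamma_t\tilde{\mu}_2|S|\norm{\vx_*}_1$ after summing over the $|S|$ coordinates. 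Collecting (a), (b), and the $|S|\theta_t$ term gives exactly the claimed inequality.

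The main obstacle — really the only delicate bookkeeping point — is the diagonal/off-diagonal split in step (a): one must be careful that the $\delta(\gamma_t)$ term multiplies $\norm{\vx_{t-1}-\vx_*}_1$ with coefficient exactly $\delta(\gamma_t)$ (not $|S|\delta(\gamma_t)$), which works because the diagonal action is coordinate-wise, whereas the off-diagonal and mismatch terms genuinely lose a factor of $|S|$ from the double sum over $S$. Everything else is the triangle inequality, the definitions \eqref{eq:mutual_coherence}--\eqref{eq:gen_rip_constant_main}, and the contraction property of $\eta_{\theta_t}$; no new assumptions beyond those already in force (noiseless model, $\vy = \mPhi\oPsi\vx_*$, support contained in $S$, and the no-false-positive conclusion of the preceding lemma) are needed.
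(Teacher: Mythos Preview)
Your proposal is correct and follows essentially the same approach as the paper: both use the soft-thresholding inequality $\abs{\eta_{\theta_t}(z)-w}\le\abs{z-w}+\theta_t$, the same $\pm\mPhi\mPsi_t\vx_*$ insertion to split off the dictionary-mismatch term, and the same diagonal/off-diagonal decomposition of the Gram matrix to produce the $\delta(\gamma_t)$ and $\gamma_t\tilde{\mu}(|S|-1)$ factors. The only cosmetic difference is that you sum over $i\in S$ first and then bound the resulting vector $\ell_1$ norm, whereas the paper bounds each coordinate $\abs{x_{t,i}-x_{*,i}}$ individually and sums at the end; the bookkeeping is identical.
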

\begin{proof}
For $i\in S$, we have:
\begin{align}
    \abs{x_{t,i} - x_{*,i}}\leq  \abs{x_{t-1,i} +\gamma_t((\mPhi\mPsi_t)_i)^\top(\vy-\mPhi\mPsi_t \vx_{t-1}) - x_{*,i}}+\theta_t.
\end{align}
At the iteration $t$ for $i\in S$, we can separate the dictionary mismatch and the rest of the error as follows:
\begin{align*}
    x_{t-1,i}& +\gamma_t((\mPhi\mPsi_t)_i)^\top(\vy-\mPhi\mPsi_t\vx_{t-1}) =\\ & x_{t-1,i}+\gamma_t( \sum_{j\in S} ((\mPhi\mPsi_t)_i)^\top(\mPhi\mPsi_t)_j (\vx_{*,j}-\vx_{t-1,j})+((\mPhi\mPsi_t)_i)^\top\mPhi(\oPsi \vx_{*}-\mPsi_t\vx_{*})).
\end{align*}
We can decompose the first part further as:
\begin{align*}
    x_{t-1,i} & +\gamma_t \sum_{j\in S} ((\mPhi\mPsi_t)_i)^\top(\mPhi\mPsi_t)_j (x_{*,j}-\vx_{t-1,j})=\\
    &(\mI-\gamma_t(\mPhi\mPsi_t)_i)^\top(\mPhi\mPsi_t)_i))\vx_{t-1,i} + \gamma_t(\mPhi\mPsi_t)_i)^\top(\mPhi\mPsi_t)_i)\vx_{*,i}\\
    &+\gamma_t \sum_{j\in S, j\neq i} ((\mPhi\mPsi_t)_i)^\top(\mPhi\mPsi_t)_j (\vx_{*,j}-\vx_{t-1,j}).
\end{align*}
Using triangle inequality for the previous decomposition we get:
\begin{align*}
    \abs{x_{t-1,i} +\gamma_t((\mPhi\mPsi_t)_i)^\top(\vy-\mPhi\mPsi_t \vx_{t-1}) - x_{*,i}}&\leq \abs{(1-\gamma_t(\mPhi\mPsi_t)_i)^\top(\mPhi\mPsi_t)_i))(x_{t-1,i}-x_{*,i})}\\
     &+ \gamma_t \abs{\sum_{j\in S, j\neq i} ((\mPhi\mPsi_t)_i)^\top(\mPhi\mPsi_t)_j (\vx_{*,j}-\vx_{t-1,j})} \\
     &+\gamma_t\abs{((\mPhi\mPsi_t)_i)^\top\mPhi(\oPsi \vx_{*}-\mPsi_t\vx_{*})  )}\\
     &\leq \delta(\gamma_t)\abs{(z_{t-1,i}-z_{*,i})}\\
     & + \gamma_t \sum_{j\in S, j\neq i} \tilde{\mu}\abs{x_{*,j}-x_{t-1,j}} + \gamma_t\tilde{\mu}_2 \norm{\vx_*}_1
\end{align*}
It suffices to sum up the errors and combine previous inequalities to get:
\begin{align*}
    \norm{\vx_{S,t} - \vx_{*}}_1 & = \sum_{i\in S}\abs{\vx_{t,i} - \vx_{*,i}} \leq \\
    &\leq \paran{\delta(\gamma_t) +\gamma_t\tilde{\mu}(|S|-1)} \norm{\vx_{S,t-1}-\vx_{*}}_1 
    +  \gamma_t\tilde{\mu}_2 |S|\norm{\vx_*}_1   +|S|\theta_t.
\end{align*}
% Note that this analysis shows that we strive for having smaller $\tilde{\mu}_2$, namely being close to the dictionary, small enough $\theta_t$ as iterations grow, and small $\tilde{\mu}$ and $\delta(\gamma_t)$. 

% In general, if we have  
% \begin{align}
%      \gamma_t \paran{\tilde{\mu} \norm{ \vx_{*}-\vx_{t-1}}_1 +\tilde{\mu}_2\norm{\vx_{*}}_1 }\leq {\theta_t}.
% \end{align} 
Since we assumed there is no false positive, we get the final result:
\begin{align*}
    \norm{\vx_{t} - \vx_{*}}_1 & = \sum_{i\in S}\abs{\vx_{t,i} - \vx_{*,i}} \leq \paran{\delta(\gamma_t) +\gamma_t\tilde{\mu}(|S|-1)} \norm{\vx_{t-1}-\vx_{*}}_1 
    +  \gamma_t\tilde{\mu}_2 |S|\norm{\vx_*}_1   +|S|\theta_t.
\end{align*}
\end{proof}
% The second part, was bounded above.
% which, taking the threshold into implies that 
% % \begin{align*}
% % \vx_{t,i} - \vx_{*,i}=
% %     &(1-\gamma_t(\mPhi\mPsi_t)_i)^\top(\mPhi\mPsi_t)_i))(\vx_{t-1,i}-\vx_{*,i}) +\gamma_t \sum_{j\in S, j\neq i} ((\mPhi\mPsi_t)_i)^\top(\mPhi\mPsi_t)_j (\vx_{*,j}-\vx_{t-1,j}))
% % \end{align*}
% % and therefore:
% \begin{align*}
% \abs{\vx_{t,i} - \vx_{*,i}}\leq 
%     &\abs{(1-\gamma_t(\mPhi\mPsi_t)_i)^\top(\mPhi\mPsi_t)_i))(\vx_{t-1,i}-\vx_{*,i})} +\gamma_t \sum_{j\in S, j\neq i} \tilde{\mu}\abs{\vx_{*,j}-\vx_{t-1,j}}\\
%     &+ \abs{((\mPhi\mPsi_t)_i)^\top\mPhi(\mPsi \vx_{*}-\mPsi_t\vx_{*})}  +\theta_t
% \end{align*}
% Assuming proper choice of step size and threshold, we can assume that the support is locked. Final error bounds is given by:
% \begin{align*}
%     \norm{\vx_{t} - \vx_{*}}_1 & = \sum_{i\in S}\abs{\vx_{t,i} - \vx_{*,i}} \leq \\
%     & \sum_{i\in S}\abs{(1-\gamma_t(\mPhi\mPsi_t)_i)^\top(\mPhi\mPsi_t)_i))(\vx_{t-1,i}-\vx_{*,i})} +\gamma_t  \sum_{i\in S}\sum_{j\in S, j\neq i} \tilde{\mu}\abs{\vx_{*,j}-\vx_{t-1,j}}\\
%     &+  \sum_{i\in S} \abs{((\mPhi\mPsi_t)_i)^\top\mPhi(\mPsi \vx_{*}-\mPsi_t\vx_{*})}  +|S|\theta_t \\
%     &\leq \paran{\delta(\gamma_t) +\gamma_t\tilde{\mu}(|S|-1)} \norm{\vx_{t-1}-\vx_{*})}_1 
%     +  \norm{(\mPhi\mPsi_t)^\top\mPhi(\mPsi -\mPsi_t)\vx_{*}}_1  +|S|\theta_t
% \end{align*}
% \subsection{Generalization to various sensing matrices}
\section{Implementation Details}\label{sec:app_impl}
In this section we report details concerning the architecture of \ac{A-DLISTA} and \ac{VLISTA}. 

\subsection{A-DLISTA (Augmentation Network)}

As previously stated in the main paper (\autoref{sec:dlista_aug}), A-DLISTA consists of two architectures: the DLISTA model (blue blocks in \autoref{fig:dlista_aug}) representing the unfolded version of the ISTA algorithm with parametrized $\mPsi$, and the augmentation (or adaptation) network (red blocks in \autoref{fig:dlista_aug}). 
At a given reconstruction layer $t$, the augmentation model takes the measurement matrix $\mPhi^i$ and the dictionary $\mPsi_t$ as input and generates the parameters $\{\gamma_t, \theta_t\}$ for the current iteration. The architecture for the augmentation network is illustrated in  \autoref{fig:aug_network_adlista}, which shows a feature extraction section and two output branches, one for each generated parameter. To ensure that the estimated $\{\gamma_t, \theta_t\}$ parameters are positive, each branch is equipped with a softplus function. As noted in the main paper, the weights of the augmentation model are shared across all A-DLISTA layers.

\begin{figure}
% \vskip 0.2in
    \centering
    \includegraphics[width=0.5\linewidth]{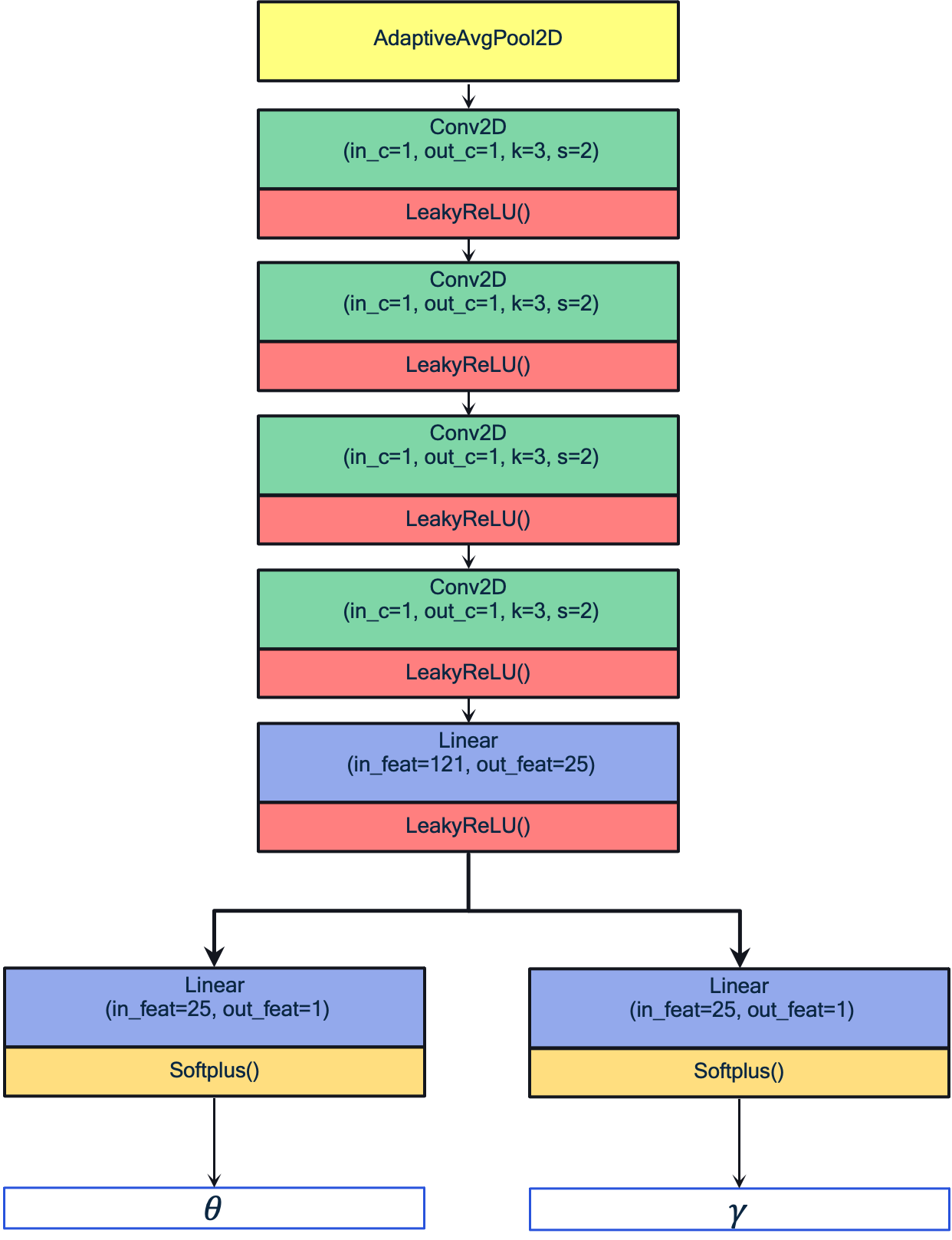}
    \caption{Augmentation model's architecture for \ac{A-DLISTA}.}
    \label{fig:aug_network_adlista}
    % \vskip 0.2in
\end{figure}

\subsection{VLISTA}
As described in~\autoref{sec:vlista_sec} of the meain paper, VLISTA comprises three different components: the likelihood, and the prior and posterior models.

\subsubsection{VLISTA - Likelihood model} 

The likelihood model (\autoref{sec:vlista_sec}) represents a Gaussian distribution with a mean value parametrized using the A-DLISTA model. There is. However, a fundamental difference between the likelihood model and the \ac{A-DLISTA} architecture is presented in~\autoref{sec:dlista_aug}. Indeed, unlike the latter, the likelihood model of VLISTA does not learn the dictionary using backpropagation. Instead, it uses the dictionary sampled from the posterior distribution.

\subsubsection{VLISTA - Posterior \& Prior models} 

We report in \autoref{fig:prior_posterior_module} the prior (left image) and the posterior (right image) architectures. 
We implement both models using an encoder-decoder scheme based on convolutional layers. 
The prior network comprises two convolutional layers followed by two separate branches dedicated to generating the mean and variance of the Gaussian distribution~\autoref{sec:vlista_sec}. We use the dictionary sampled at the previous iteration as input for the prior. 
In contrast to the prior, the posterior network accepts three different quantities as input: the sensing matrix, the observations, and the reconstructed sparse vector from the previous iteration. To process the three inputs together, the posterior accounts for three separated ``input'' layers followed by an aggregation step. Subsequently, two branches are used to generate the mean and the standard deviation of the Gaussian distribution of the dictionary~\autoref{sec:vlista_sec}.\\
We offer the reader a unified overview of our variational model in~\autoref{fig:vlista_overall_model}. 

\begin{figure}
    \includegraphics[width=\linewidth]{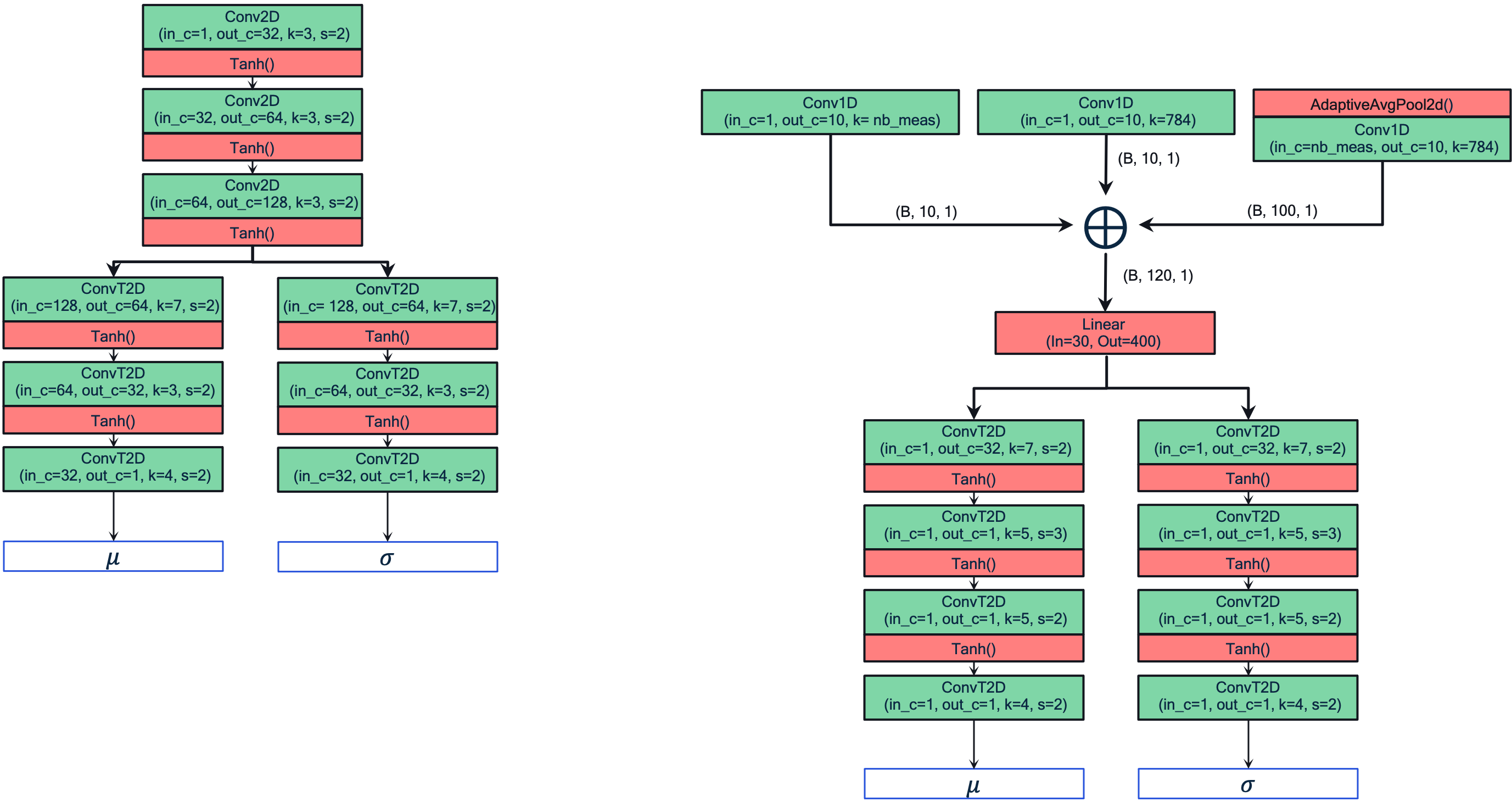}
    \caption{Left: prior network architecture. Right: posterior network architecture. For the posterior model, we show the output shape from each of the three input heads in the figure. Such a structure is necessary since the posterior model accepts three quantities as input: observations, the sensing matrix, and the reconstruction from the previous layer. Different shapes characterize these quantities. The letter ``B'' indicates batch size.}
    \label{fig:prior_posterior_module}
\end{figure}

\begin{figure}
    \centering
    \includegraphics[width=0.8\textwidth]{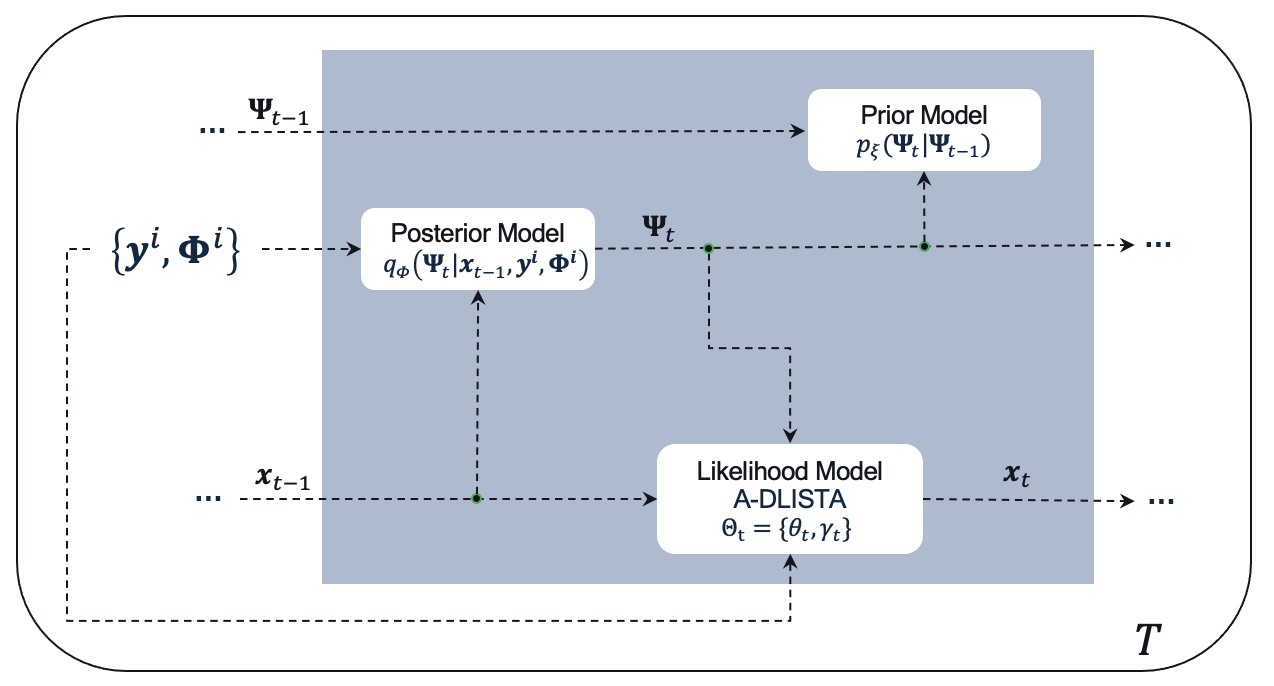}
    \caption{VLISTA iterations' schematic view.}
    \label{fig:vlista_overall_model}
\end{figure}

% Specifically, in the figure we report the likelihood and the posterior onlhy, given that the prior is used for the objective evaluation only. However, the role of the prior model in the overall architecture is emphasized in \autoref{sec:app_training} (\autoref{fig:vlista_overall_model}).

% \begin{figure}[!ht]
% \vskip 0.2in
%     \centering
%     \includegraphics[width=\textwidth]{}
%     \caption{\hl{VLISTA architecture. The blue blocks represent a single soft-thresholding operation parametrized by the dictionary $\mPsi_t$ together with threshold and step size $\{\theta_t, \gamma_t\}$ at layer $t$. The red blocks represent the augmentation network (with shared parameters across layers) that adapts $\{\theta_t, \gamma_t\}$ for layer $t$ based on the dictionary $\mPsi_t$ and the current measurement setup $\mPhi^i$ for the 
%     $i$-th data sample. The violet blocks represent the posterior model (with shared parameters across layers) from which to sample the dictionary, $\mPsi_t$, for the current iteration.}}
%     \label{fig:vlista_architecture_full}
%     \vskip -0.2in
% \end{figure}
\section{Training Details}\label{sec:app_training}

This section provides details regarding the training of the A-DLISTA and VLISTA models. 
Using the Adam optimizer, we train the reconstruction and augmentation models for A-DLISTA jointly. We set the initial learning rate to $1.e^{-2}$ and $1.e^{-3}$ for the reconstruction and augmentation network, respectively, and we drop its value by a factor of 10 every time the loss stops improving.
Additionally, we set the weight decay to $5.e^{-4}$ and the batch size to 128. We use Mean Squared Error (MSE) as the objective function for all datasets. 
We train all the components of VLISTA using the Adam optimizer, which is similar to A-DLISTA. We set the learning rate to $1.e^{-3}$ and drop its value by a factor of 10 every time the loss stops improving. 
Regarding the objective function, we maximize the ELBO and set the weight for KL divergence to $1.e^{-3}$. 
We report in \autoref{eqn:obj_func_full} the ELBO derivation.

\begin{align}\label{eqn:obj_func_full}
    \mathrm{log} p(\vx_{1:T}=\vx_{gt}^i|\vy^i, \mPhi^i) 
    &= \mathrm{log} \int p(\vx_{1:T}=\vx_{gt}^i|\mPsi_{1:T}, \vy^i, \mPhi^i)p(\mPsi_{1:T}) d\mPsi_{1:T} \\ \nonumber
    &= \mathrm{log} \int \frac{p(\vx_{1:T}=\vx_{gt}^i|\mPsi_{1:T}, \vy^i, \mPhi^i)p(\mPsi_{1:T})q(\mPsi_{1:T}|\vx_{1:T}, \vy^i, \mPhi^i)}{q(\mPsi_{1:T}|\vx_{1:T}, \vy^i, \mPhi^i)}d\mPsi_{1:T} \\ \nonumber
    &\geq \int q(\mPsi_{1:T}|\vx_{1:T}, \vy^i, \mPhi^i)\ \mathrm{log} \frac{p(\vx_{1:T}=\vx_{gt}^i|\mPsi_{1:T}, \vy^i, \mPhi^i)p(\mPsi_{1:T})}{q(\mPsi_{1:T}|\vx_{1:T}, \vy^i, \mPhi^i)}d\mPsi_{1:T} \\ \nonumber
    &= \int q(\mPsi_{1:T}|\vx_{1:T}, \vy^i, \mPhi^i)\ \mathrm{log}\ p(\vx_{1:T}=\vx_{gt}^i|\mPsi_{1:T}, \vy^i, \mPhi^i)d\mPsi_{1:T} \\ \nonumber 
    &+ \int q(\mPsi_{1:T}|\vx_{1:T}, \vy^i, \mPhi^i)\ \mathrm{log}\ \frac{p(\mPsi_{1:T})}{q(\mPsi_{1:T}|\vx_{1:T}, \vy^i, \mPhi^i)}d\mPsi_{1:T} \\ \nonumber
    &= \sum_{t=1}^{T}\mathbb{E}_{\mPsi_{1:t}\sim q(\mPsi_{1:t}|\vx_{0:t-1}, \vy^i,\mPhi^i)} \Big[\mathrm{log}\ p(\vx_t=\vx_{gt}^i|\mPsi_{1:t}, \vy^i, \mPhi^i) \Big] \\ \nonumber
    &- \sum_{t=2}^{T} \mathbb{E}_{\mPsi_{1:t-1}\sim q(\mPsi_{1:t-1}|\vx_{t-1}, \vy^i, \mPhi^i)} \Big[ D_{KL} \Big( q(\mPsi_t|\vx_{t-1}, \vy^i, \mPhi^i) \parallel p(\mPsi_t|\mPsi_{t-1}) \Big) \Big] \\ \nonumber
    &- D_{KL}\Big( q(\mPsi_1|\vx_0) \parallel p(\mPsi_1) \Big)
\end{align}

Note that in \autoref{eqn:obj_func_full}, we consider the same ground truth, $\vx_{gt}^i$, for each iteration $t\in[1,T]$.
\section{Computational Complexity}\label{sec:comp_complx}

\fv{
This section provides a complexity analysis of the models utilized in our research.~\autoref{tab:params_time} displays the number of trainable parameters and average inference time for each model, while~\autoref{tab:macs} showcases the MACs count. To better understand the quantities appearing in~\autoref{tab:macs}, we have summarized their meaning in~\autoref{tab:macs_expl}.
The average inference time was estimated by testing over 1000 batches containing 32 data points using a GeForce RTX 2080 Ti.}
\paragraph{Trainable Parameters and Average Inference Time.}
\fv{To compute the values in~\autoref{tab:params_time}, we considered the architectures used in the main corpus of the paper, e.g. same number of layers. From~\autoref{tab:params_time}, it's worth noting that although ISTA appears to have the longest inference time, that can be attributed to the cost of computing the spectral norm of the matrix $\mA=\mPhi\mPsi$. Such an operation, can consume up to 98\% of the total inference time. Interestingly, neither NALISTA nor A-DLISTA require the computation of the spectral norm as they dynamically generate the \emph{step size}. Additionally, LISTA does not require it at all. NALISTA and A-DLISTA have comparable inference times due to the similarity of their operations, whereas LISTA is the fastest model, whilst VLISTA has a higher average inference time given the use of the posterior model and the sampling procedure. Interestingly, LISTA and A-DLISTA have a comparable number of trainable parameters, while NALISTA has significantly fewer. However, it's essential to emphasize that the number of trainable parameters depends on the problem setup, such as the number of measurements and atoms. We use the same experimental setup described in the main paper, which includes 500 measurements, 1024 atoms, and three layers for each ML model. 
As outlined in the main paper, the likelihood model for VLISTA is similar in architecture to A-DLISTA, as reflected in the MACs count shown in~\autoref{tab:macs}. However, the likelihood model of VLISTA has a different number of trainable parameters compared to A-DLISTA. Such a dufference is due to VLISTA sampling its dictionary from the posterior rather than training it like A-DLISTA. Despite this difference, the time required for the likelihood model (shown in~\autoref{tab:params_time}) is comparable to that of A-DLISTA. It's important to note that the inference time for the likelihood is reported ``per iteration'', so we must multiply it by the number of layers A-DLISTA uses to make a fair comparison.}

\begin{table}[!b]
\caption{Number of trainable parameters (Millions) and Average inference time (milliseconds) for different models. Concerning the inference time, we report the average value with its error considering 10 and 50 measurements setups.} \label{tab:params_time}
\begin{center}
\begin{threeparttable}
\begin{tabularx}{\linewidth}{@{}ll|YYY@{}}
\toprule
&\multirow{2}{*}{} & \multicolumn{1}{c}{Parameters $(M)$} & \multicolumn{2}{c}{Average Inference Time $(ms)$} \\ \cline{4-5}
&\multirow{2}{*}{} & \multicolumn{1}{c}{} & \multicolumn{1}{c}{meas. = 10} & \multicolumn{1}{c}{meas. = 500}\\
\midrule
&\multicolumn{1}{l|}{ISTA} & $0.00$ & $54.0_{\pm0.6}$ (norm: $41.5_{\pm0.6}$) & $(1.55_{\pm0.02})e^3$ (norm.: $(1.53_{\pm0.02})e^3$) \\  
&\multicolumn{1}{l|}{NALISTA${}^1$} & $3.33e^{-1}$ & $5.8_{\pm0.2}$ & $7.0_{\pm0.3}$ \\ 
&\multicolumn{1}{l|}{LISTA} & $3.15$ & $1.1_{\pm0.1}$ & $1.5_{\pm0.3}$ \\  
% &\multicolumn{1}{l|}{A-DLISTA${}^{2,4}$ (shared)} & $3.15$ (Aug. NN: $3.11e^{-3}$) & $1.05$ (Aug. NN: $4.03e^{-3}$) & $19.3_{\sigma=8.7}$ \\ 
&\multicolumn{1}{l|}{A-DLISTA${}^{2}$} & $3.15$ (Aug. NN: $3.11e^{-3}$) & $8.2_{\pm0.3}$ &  $9.1_{\pm0.5}$ \\  % Not shared
\midrule
&\multicolumn{1}{l|}{VLISTA} & $3.13^\dagger$ & $19.7_{\pm0.4}^\dagger$ &  $21.3_{\pm0.4}^\dagger$ \\ 
\midrule
\multirow{3}{*}{\STAB{\rotatebox[origin=c]{90}{VLISTA}}}
&\multicolumn{1}{l|}{Prior Model} & $1.10$ & $-$ &  $-$ \\   
&\multicolumn{1}{l|}{Posterior Model} & $2.08$ & $3.6_{\pm0.2}^\ddagger$ &  $4.1_{\pm0.2}^\ddagger$ \\ 
&\multicolumn{1}{l|}{Likelihood Model} & $1.05$ & $2.7_{\pm0.2}^\ddagger$ &  $3.05_{\pm0.2}^\ddagger$ \\  
\bottomrule
\end{tabularx}
\begin{tablenotes}
\item[1] LSTM hidden size equal to 256; 
\item[2] Each layer learns its own dictionary; 
\item[$\dagger$] Full model at inference - Prior model NOT used; 
\item[$\ddagger$] Single iteration.
\end{tablenotes}
\end{threeparttable}
\end{center}
\end{table}

\paragraph{Macs count.}
\fv{Our attention now turns to the MACs count for the A-DLISTA augmentation network. As shown in Table \ref{tab:macs}, the count is upper bounded by $HWK^2 + BP$. Note that the height and width of the input are halved after each convolutional layer, while the input and output channels are always one, and the kernel size equals three for each layer (see details in~\autoref{fig:aug_network_adlista}). To obtain the upper bound for the MACs count, we set $H=\underset{i}{\textrm{max}}(H_i)=H^{input}/2^i$ and $W=\underset{i}{\textrm{max}}(W_i)=W^{input}/2^i$, where $H_i$ and $W_i$ are the height and width at the output of the $i$-th convolutional layer, respectively. With that in mind, we can upper bounds the MACs count for the convolutional part of the network by $HWK^2$. The convonlutional backbone is followed by two linear layers (see details in~\autoref{fig:aug_network_adlista}). The first linear layer takes a vector of size $B\in \mathbb{R}^{H^{input}/16\times W^{input}/16}$ as input and outputs a vector of length $P=25$. Finally, this vector is fed into two heads, each generating a scalar. Therefore, the overall upper bound for the MACs count for the augmentation network is $\mathcal{O}(HWK^2 + BP + P)=\mathcal{O}(HWK^2 + P(B + 1))=\mathcal{O}(HWK^2 + BP)$, with the factor $+1$ dropped. Similar reasoning applies to the prior and posterior models of VLISTA, where we estimate the MACs count by multiplying the MACs for the most expensive layers by the total number of layers of the same type.
\begin{table}[!b]
\caption{MACs count. Concerning VLISTA, we report the MACs count for each of its component: the Prior, the Posterior, and the Likelihood models.} \label{tab:macs}
\begin{center}
\begin{threeparttable}
\begin{tabularx}{\linewidth}{@{}ll|Y@{}}
\toprule
&\multirow{3}{*}{} & \multicolumn{1}{c}{MACs} \\

\midrule

&\multicolumn{1}{l|}{ISTA} & $\mathcal{O}(DM(2L+N) + D^2M)$ \\  

&\multicolumn{1}{l|}{NALISTA} & $\mathcal{O}(DM(2L+N) + [4h(d+h)+h^2]^\dagger)$ \\ 

&\multicolumn{1}{l|}{LISTA} & $\mathcal{O}(LD(N+D)+LMN)$ \\  

&\multicolumn{1}{l|}{A-DLISTA} & $\mathcal{O}(LDMN + [ HWK^2 + BP ]^\dagger)$ \\  
\midrule

% &\multicolumn{1}{l|}{VLISTA} & $---^\dagger$ \\ 
% \midrule

\multirow{3}{*}{\STAB{\rotatebox[origin=c]{90}{VLISTA}}}

&\multicolumn{1}{l|}{Prior Model} & $\mathcal{O}(3LH_{pr}W_{pr}C_{pr}^iC_{pr}^o(K_{pr}^2 + 2T_{pr}^2) $ \\   

&\multicolumn{1}{l|}{Posterior Model} & $\mathcal{O}(L(D+M+DC_{po}^iC_{po}^o+L_{po}^iS+10H_{po}W_{po}T_{po}^2))$\\ 

&\multicolumn{1}{l|}{Likelihood Model} & $\mathcal{O}(LDMN + [ HWK^2 + BP ]^\dagger)$  \\  

\bottomrule
\end{tabularx}
\begin{tablenotes}\
\item[$\dagger$] Contribution from the augmentation network.
\end{tablenotes}
\end{threeparttable}
\end{center}
\end{table}
\begin{table}[!b]
\caption{Description of quantities appearing in~\autoref{tab:macs}. } \label{tab:macs_expl}
\begin{center}
\begin{threeparttable}
\begin{tabularx}{\linewidth}{@{}l|Y@{}}
\toprule
% &\multirow{3}{*}{} & \multicolumn{1}{c}{} \\

% \midrule

$M$ & Number of measurements \\
$N$ & Dimensionality of dictionary's atoms \\
$D$ & Number of atoms \\
$L$ & Number of layers \\
$h;\ d$ & Hidden and input size for the LSTM \\
$H;\ W$ & Height and width of A-DLISTA augmentation network's input \\
$K$ & Kernel size for the Conv layers of A-DLISTA augmentation network \\
$B;\ P$ & Input and output size of the linear layer of A-DLISTA augmentation network \\
$C_{po}^i;\ C_{po}^o$ & Input and output channels for the ``$\mPhi$-input'' head of the posterior model \\
$L_{po}^i;\ S$ & Posterior model bottleneck input and output sizes \\
$H_{po};\ W_{po};$ & Height and width of posterior model's transposed convolutions input \\
$T_{po}$ & Kernel size of the posterior model's transposed convolutions \\
$H_{pr};\ W_{pr}$ &  Input and output sizes of convolutional (and transposed conv.) layers of the prior model \\
$C_{pr}^i;\ C_{pr}^o$ & Input and output channels of convolutional (and transposed conv.) layers of the prior model \\
$K_{pr};\ T_{pr}$ & kernel size for convolutions and transpose convolutions of the prior model \\

\bottomrule
\end{tabularx}
\end{threeparttable}
\end{center}
\end{table}
}

\section{Additional Results}\label{sec:additional_results}

In this section we report additional experimental results. In~\autoref{sec:fix_phi} we report results concerning a fix measurement setup, i.e. $\mPhi^i\rightarrow \mPhi$, while in~\autoref{sec:abl_std_dict_learning} we show reconstructed images for different classical baselines.

\subsection{Fixed Sensing Matrix}\label{sec:fix_phi}

We provide in~\autoref{tab:ssim_mnist_measurements_fix_phi} and~\autoref{tab:ssim_cifar10_measurements_fix_phi} results considering a fixed measurement scenario, i.e. using a single sensing matrix $\mPhi$. Comparing these results to \autoref{tab:ssim_mnist_measurements} and~\autoref{tab:ssim_cifar10_measurements}, we notice the following.
To begin with, LISTA and A-DLISTA perform better compared to the set up in which we use a varying sensing matrix (see~\autoref{sec:exp_results}). We should expect such behaviour given that we simplified the problem by fixing the $\mPhi$ matrix. Additionally, as we mentioned in the main paper, ALISTA and NALISTA exhibit high performances (superior to other models when 300 and 500 measurements are considered). Such a result is expected, given that these two models were designed for solving inverse problems in a fixed measurement scenario. Furthermore, the results in ~\autoref{tab:ssim_mnist_measurements_fix_phi} and~\autoref{tab:ssim_cifar10_measurements_fix_phi} support our hypothesis that the convergence issues we observe in the varying sensing matrix setup are likely related to the ``inner\fv{''} optimization that ALISTA and NALISTA require to evaluate the ``W\fv{''} matrix. \\

\begin{table}[ht]
\caption{MNIST SSIM (the higher the better) for a different number of measurements with \textbf{fixed sensing matrix}, i.e., $\mPhi^i\rightarrow\mPhi$. We highlight in bold the best performance. Note that whenever there is agreement within the error for the best performances, we highlight all of them.} \label{tab:ssim_mnist_measurements_fix_phi}

\begin{center}
  
\begin{tabularx}{\linewidth}{@{}llYYYYY@{}}
\toprule

&\multirow{2}{*}{} & \multicolumn{5}{c}{SSIM $\uparrow$} \\
&\multirow{2}{*}{} & \multicolumn{5}{c}{} \\
&\multirow{2}{*}{} & \multicolumn{5}{c}{number of measurements} \\ \cline{3-7}
&\multirow{2}{*}{} & 1 & 10 & 100 & 300 & 500 \\ 
&\multirow{2}{*}{} & ($\times e^{-1}$) & ($\times e^{-1}$) & ($\times e^{-1}$) & ($\times e^{-1}$) & ($\times e^{-1}$) \\ 
\midrule

% \multirow{5}{*}{\STAB{\rotatebox[origin=c]{90}{Non-Bayes}}}

% Old results 5 layers
&\multicolumn{1}{l|}{\ac{LISTA}} & $\bf 1.34 _{\pm 0.02}$ & $3.12_{\pm 0.02}$ & $\bf 5.98_{\pm 0.01}$ & $6.74_{\pm 0.01}$ & $6.96_{\pm 0.01}$ \\

&\multicolumn{1}{l|}{\ac{ALISTA}} & $0.84 _{\pm 0.01}$ & $0.94 _{\pm 0.01}$ & $1.70 _{\pm 0.01}$ & $5.71 _{\pm 0.01}$ & $6.65 _{\pm 0.01}$ \\

&\multicolumn{1}{l|}{\ac{NALISTA}}   & $0.91 _{\pm 0.01}$ & $1.12 _{\pm 0.01}$ & $2.46 _{\pm 0.01}$ & $\bf 7.03 _{\pm 0.01}$ & $\bf 8.22 _{\pm 0.02}$ \\

% &\multicolumn{1}{l|}{DLISTA} & $1.04 _{\pm 0.02}$ & $3.51 _{\pm 0.02}$ & $4.98 _{\pm 0.02}$ & $5.13 _{\pm 0.01}$ & $6.37 _{\pm 0.01}$ \\

&\multicolumn{1}{l|}{\ac{A-DLISTA} (our)} & $1.21 _{\pm 0.02}$ & $\bf 3.58 _{\pm 0.01}$ & $5.66 _{\pm 0.01}$ & $6.47 _{\pm 0.01}$ & $6.84 _{\pm 0.01}$ \\
% \midrule

% \multirow{2}{*}{\STAB{\rotatebox[origin=c]{90}{Bayes}}}

% % &\multicolumn{1}{l|}{BCS} & $-$ & $0.10 _{\pm 0.02$ & $2.64 _{\pm 0.05$ & $5.35 _{\pm 0.09$ & $6.39 _{\pm 0.06$ \\
% &\multicolumn{1}{l|}{BCS} & $0.06 _{\pm 0.01}$ & $0.67 _{\pm 0.01}$ & $1.09 _{\pm 0.01}$ & $4.64 _{\pm 0.02}$ & $\bf 6.75 _{\pm 0.02}$ \\

% &\multicolumn{1}{l|}{\ac{VLISTA} (our)} & $\bf 1.11 _{\pm 0.02}$ & $\bf 3.18 _{\pm 0.02}$ & $\bf 5.04 _{\pm 0.01}$ & $\bf 5.91 _{\pm 0.01}$ & $6.42 _{\pm 0.01}$  \\
\bottomrule

\end{tabularx}

\end{center}
% \vskip -0.1in

\end{table}

\begin{table}[ht]
\caption{CIFAR10 SSIM (the higher the better) for a different number of measurements with \textbf{fixed sensing matrix}, i.e., $\mPhi^i\rightarrow\mPhi$. Note that whenever there is agreement within the error for the best performances, we highlight all of them.} \label{tab:ssim_cifar10_measurements_fix_phi}

\begin{center}
  
\begin{tabularx}{\linewidth}{@{}llYYYYY@{}}
\toprule

&\multirow{2}{*}{} & \multicolumn{5}{c}{SSIM $\uparrow$} \\
&\multirow{2}{*}{} & \multicolumn{5}{c}{} \\
&\multirow{2}{*}{} & \multicolumn{5}{c}{number of measurements} \\ \cline{3-7}
&\multirow{2}{*}{} & 1 & 10 & 100 & 300 & 500 \\ 
&\multirow{2}{*}{} & ($\times e^{-1}$) & ($\times e^{-1}$) & ($\times e^{-1}$) & ($\times e^{-1}$) & ($\times e^{-1}$) \\ 
\midrule

% \multirow{5}{*}{\STAB{\rotatebox[origin=c]{90}{Non-Bayes}}}

% Old results 5 layers
&\multicolumn{1}{l|}{\ac{LISTA}} & $2.52 _{\pm 0.01}$ & $\bf 3.19 _{\pm 0.01}$ & $\bf 4.48 _{\pm 0.01}$ & $\bf 6.29 _{\pm 0.01}$ & $6.74 _{\pm 0.01}$ \\

% Old results 5 layers
&\multicolumn{1}{l|}{\ac{ALISTA}} & $0.21 _{\pm 0.03}$ & $0.54 _{\pm 0.02}$ & $0.88 _{\pm 0.01}$ & $3.54 _{\pm 0.01}$ & $5.52 _{\pm 0.01}$ \\
% New results 3 layers
% &\multicolumn{1}{l|}{\ac{ALISTA}}   & $1.15 _{\pm 0.02$ & $0.36 _{\pm 0.01$ & $0.92 _{\pm 0.01$ & $3.17 _{\pm 0.01$  & $5.52 _{\pm 0.03$ \\

% Old results 5 layers
&\multicolumn{1}{l|}{\ac{NALISTA}} & $1.32 _{\pm 0.02}$ & $1.32 _{\pm 0.02}$ & $1.06 _{\pm 0.02}$ & $4.59 _{\pm 0.01}$ & $\bf 6.88 _{\pm 0.01}$ \\
% Old results 3 layers
% &\multicolumn{1}{l|}{\ac{NALISTA}} & $0.67 _{\pm 0.01$ & $0.95 _{\pm 0.00$ & $1.18 _{\pm 0.01$ & $4.51 _{\pm 0.01$ & $\bf 6.88 _{\pm 0.03$ \\

% &\multicolumn{1}{l|}{DLISTA} & $1.94 _{\pm 0.02}$ & $2.36 _{\pm 0.02}$ & $3.58 _{\pm 0.01}$ & $4.61 _{\pm 0.01}$ & $5.93 _{\pm 0.01}$ \\

&\multicolumn{1}{l|}{\ac{A-DLISTA} (our)} & $\bf 2.91 _{\pm 0.02}$ & $ 3.07 _{\pm 0.01}$ & $4.26 _{\pm 0.01}$ & $5.89 _{\pm 0.01}$ & $6.56 _{\pm 0.01}$ \\
% \midrule
   
% \multirow{2}{*}{\STAB{\rotatebox[origin=c]{90}{Bayes}}}

% % &\multicolumn{1}{l|}{BCS} & $-$ & $0.14 _{\pm 0.01$ & $2.61 _{\pm 0.02$ & $3.16 _{\pm 0.02$  & $4.12 _{\pm 0.04$ \\
% &\multicolumn{1}{l|}{BCS} & $0.03 _{\pm 0.01}$ & $0.45 _{\pm 0.01}$ & $0.61 _{\pm 0.01}$ & $1.32 _{\pm 0.01}$ & $1.90 _{\pm 0.01}$  \\
    
% &\multicolumn{1}{l|}{\ac{VLISTA} (our)} & $\bf 1.52 _{\pm 0.02}$ & $\bf 3.00 _{\pm 0.02}$ & $\bf 3.82 _{\pm 0.02}$ & $\bf 4.91 _{\pm 0.01}$ & $\bf 6.06 _{\pm 0.01}$  \\
\bottomrule

\end{tabularx}

\end{center}
% \vskip -0.1in

\end{table}

\subsection{Classical baselines}\label{sec:abl_std_dict_learning}

We report additional results concerning classical dictionary learning methods tested on the MNIST and CIFAR10 datasets. 
It is worth noting that classical baselines can reconstruct images with high quality if it is assumed that there are neither computational nor time constraints (although this would correspond to an unrealistic scenario concerning real-world applications). 
Therefore, while tuning hyperparameters, we consider a number of iterations up to a several thousand.

~\autoref{fig:recon_can_mnist} to \autoref{fig:recon_spca_cifar10} showcase examples of reconstructed images for different baselines. 

\begin{figure}[!ht]
    \centering
    \includegraphics[width=\linewidth]{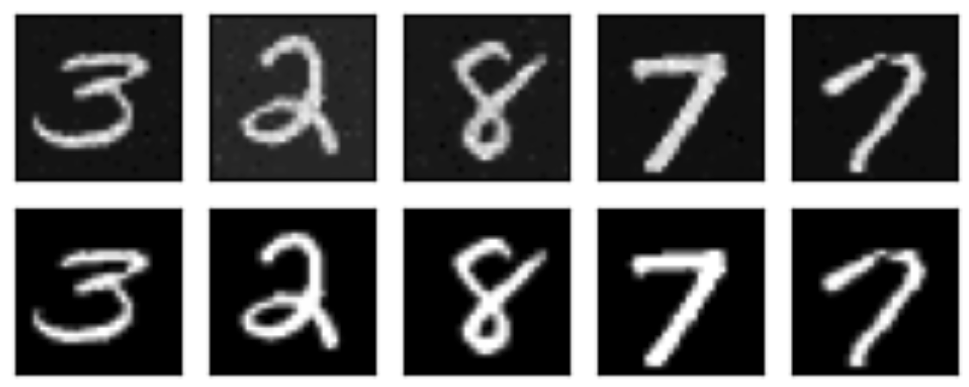}
    \caption{Example of reconstructed MNIST images using the canonical basis. Top row: reconstructed images. Bottom row: ground truth images. To reconstruct images we use 500 measurements and the number of layers optimized to get the best reconstruction possible.}
    \label{fig:recon_can_mnist}
\end{figure}

\begin{figure}[!ht]
    \centering
    \includegraphics[width=\linewidth]{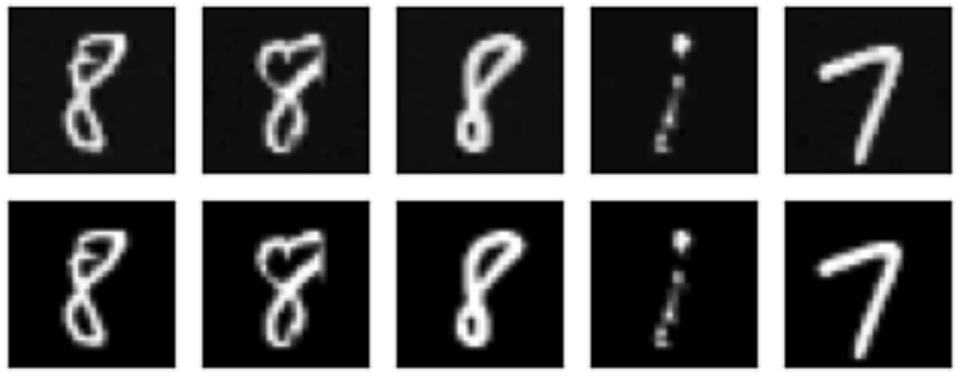}
    \caption{Example of reconstructed MNIST images using the wavelet basis. Top row: reconstructed images. Bottom row: ground truth images. To reconstruct images we use 500 measurements and the number of layers optimized to get the best reconstruction possible.}
    \label{fig:recon_wav_mnist}
\end{figure}

\begin{figure}[!ht]
    \centering
    \includegraphics[width=\linewidth]{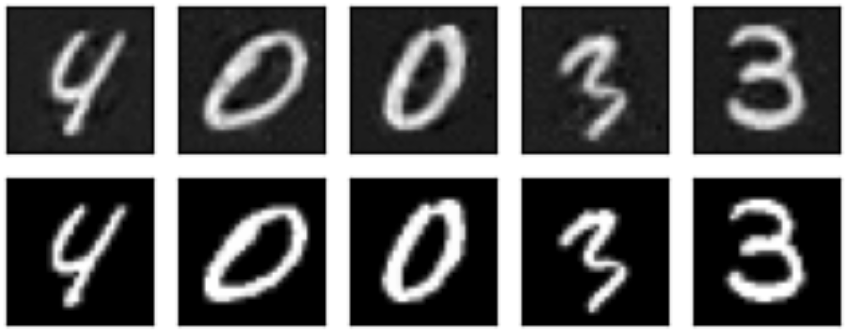}
    \caption{Example of reconstructed MNIST images using SPCA. Top row: reconstructed images. Bottom row: ground truth images. To reconstruct images we use 500 measurements and the number of layers optimized to get the best reconstruction possible.}
    \label{fig:recon_spca_mnist}
\end{figure}

\begin{figure}[!ht]
    \centering
    \includegraphics[width=\linewidth]{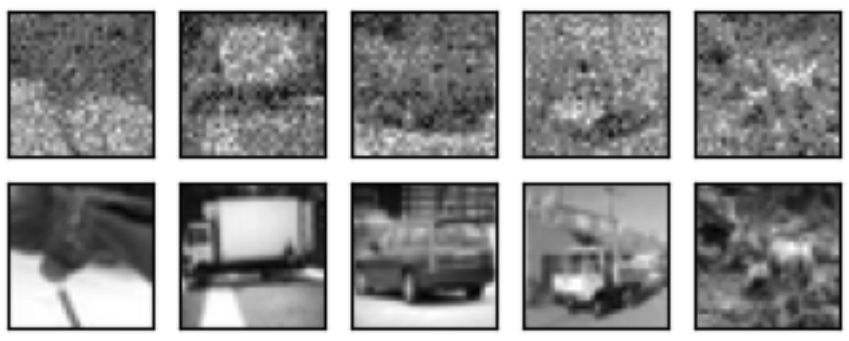}
    \caption{Example of reconstructed CIFAR10 images using the canonical basis. Top row: reconstructed images. Bottom row: ground truth images. To reconstruct images we use 500 measurements and the number of layers optimized to get the best reconstruction possible.}
    \label{fig:recon_can_cifar10}
\end{figure}

\begin{figure}[!ht]
    \centering
    \includegraphics[width=\linewidth]{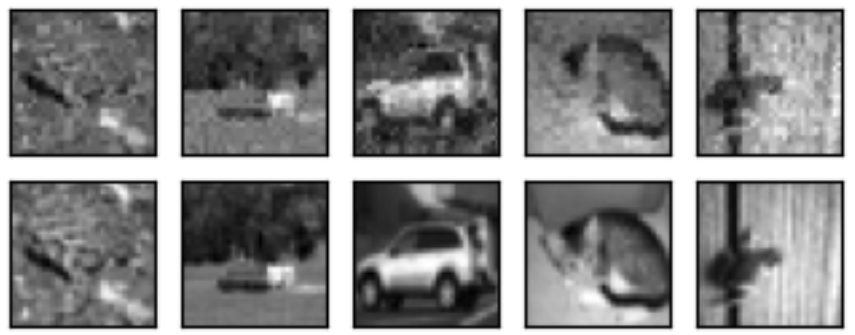}
    \caption{Example of reconstructed CIFAR10 images using the wavelet basis. Top row: reconstructed images. Bottom row: ground truth images. To reconstruct images we use 500 measurements and the number of layers optimized to get the best reconstruction possible.}
    \label{fig:recon_wav_cifar10}
\end{figure}

\begin{figure}[!ht]
    \centering
    \includegraphics[width=\linewidth]{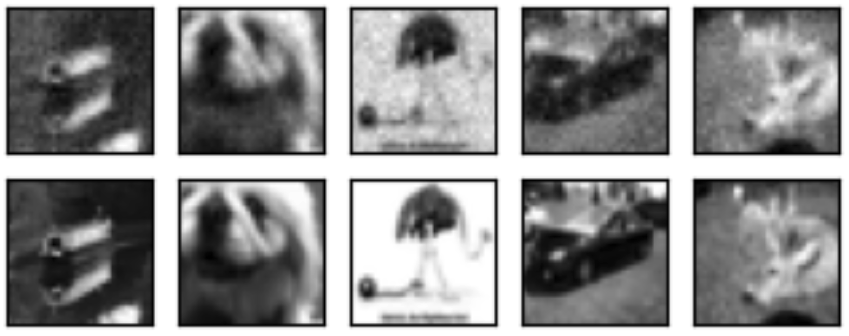}
    \caption{Example of reconstructed CIFAR10 images using SPCA. Top row: reconstructed images. Bottom row: ground truth images. To reconstruct images we use 500 measurements and the number of layers optimized to get the best reconstruction possible.}
    \label{fig:recon_spca_cifar10}
\end{figure}

% \subsection{Image Reconstruction}

% \input{tables/mnist_ssim_appendix}
% \input{tables/cifar10_ssim_appendix}

% \begin{figure}[!ht]
%     \centering
%     \includegraphics[width=1.2\linewidth]{}
%     \caption{Reconstruction from LISTA}
%     \label{fig:recon_lista_mnist}
% \end{figure}

% \begin{figure}[!ht]
%     \centering
%     \includegraphics[width=1.2\linewidth]{images_additional/recon_adlista_mnist.png}
%     \caption{Reconstruction from A-DLISTA}
%     \label{fig:recon_A-Dlista_mnist}
% \end{figure}

% \begin{figure}[!ht]
%     \centering
%     \includegraphics[width=1.2\linewidth]{images_additional/recon_vlista_mnist.png}
%     \caption{Reconstruction from VLISTA}
%     \label{fig:recon_vlista_mnist}
% \end{figure}

% \begin{figure}[!ht]
%     \centering
%     \includegraphics[width=1.2\linewidth]{}
%     \caption{Reconstruction from LISTA}
%     \label{fig:recon_lista_cifar10}
% \end{figure}

% \begin{figure}[!ht]
%     \centering
%     \includegraphics[width=1.2\linewidth]{images_additional/recon_adlista_cifar10.png}
%     \caption{Reconstruction from A-DLISTA}
%     \label{fig:recon_A-Dlista_cifar10}
% \end{figure}

% \begin{figure}[!ht]
%     \centering
%     \includegraphics[width=1.2\linewidth]{images_additional/recon_vlista_cifar10.png}
%     \caption{Reconstruction from VLISTA}
%     \label{fig:recon_vlista_cifar10}
% \end{figure}

\end{document}